\let\zz\pgfplots@environment@axis
\def\pgfplots@environment@axis{\endlinechar` \zz}
\theoremstyle{plain}
\newtheorem{theorem}{Theorem}[section]
\theoremstyle{definition}
\newtheorem{definition}{Definition}[section]
\theoremstyle{remark}
\newcommand{\Norm}[1]{\mbox{}\left\|#1\right\|}
\newcommand{\NormS}[1]{\mbox{}\left\|#1\right\|^2}
\newcommand{\setlinespacing}[1]%
           {\setlength{\baselineskip}{#1 \defbaselineskip}}
\newcommand{\BlockDiagk}[1]{\mbox{}\left(%
\begin{array}{cc}
  \Sigma_{k} & \bf{0} \\
  \bf{0} &  \Sigma_{\rho-k}\\
\end{array}\right)}
\newcommand{\BlockDiagkk}[1]{\mbox{}\left(%
\begin{array}{cc}
  \Sigma_{k} & \bf{0} \\
  \bf{0} & \bf{0} \\
\end{array}\right)}
\newcommand{\BlockDiagkrk}[1]{\mbox{}\left(%
\begin{array}{cc}
  \bf{0} & \bf{0} \\
  \bf{0} & \Sigma_{\rho-k} \\
\end{array}\right)}
\newcommand{\BlockDiagkkh}[1]{\mbox{}\left(%
\begin{array}{c}
  \Sigma_{k} \\
  \bf{0} \\
\end{array}\right)}
\newcommand{\BlockDiagkrkh}[1]{\mbox{}\left(%
\begin{array}{c}
  \bf{0} \\
  \Sigma_{\rho-k} \\
\end{array}\right)}
\long\def\killtext#1{}
\title{Learning Kernels for Structured Prediction using Polynomial Kernel Transformations}
\author{Chetan Tonde, Ahmed Elgammal \\
	Department of Computer Science,
	Rutgers University,\\
	Piscataway, NJ - 08854
	USA. \\
	\texttt{\{cjtonde,elgammal\}@cs.rutgers.edu} \\
}
\begin{document}
\maketitle
\begin{abstract}
Learning the kernel functions used in kernel methods has been a vastly explored area in machine learning. It is now widely accepted that to obtain 'good' performance, learning a kernel function is the key challenge. In this work we focus on learning kernel representations for structured regression. We propose use of polynomials expansion of kernels, referred to as Schoenberg transforms and Gegenbaur transforms, which arise from the seminal result of \citet{Schoenberg:1938fx}. These kernels can be thought of as polynomial combination of input features in a high dimensional reproducing kernel Hilbert space (RKHS).  We learn kernels over input and output for structured data, such that, dependency between kernel features is maximized. We use Hilbert-Schmidt Independence Criterion (HSIC) to measure this. We also give an efficient, matrix decomposition-based algorithm to learn these kernel transformations, and demonstrate state-of-the-art results on several real-world datasets.
\end{abstract}

\section{Introduction}
\label{sec:intro}
The goal in supervised structured prediction is to learn a prediction function $f \colon \mathcal{X} \rightarrow \mathcal{Y}$ from an input domain $\mathcal{X}$, to an output domain $\mathcal{Y}$. As an example, in articulated human pose estimation, input $x \in \mathcal{X}$ would be an image of a human performing an action, and output would be the an interdependent vector of joint positions $(x, y, z)$.  Typically, space of functions $\mathcal{H}$ is fixed (decision trees, neural networks, SVMs) and parametrized. We estimate these parameters from a given set of training examples $S = \{(x_1, y_1), (x_2, x_2), \ldots, (x_m,y_m)\} \subseteq \mathcal{X} \times \mathcal{Y}$, drawn independently identically distributed (i.i.d.) from $P(x,y)$ over $\mathcal{X} \times \mathcal{Y}$. We formulate a meaningful loss function $\mathcal{L} \colon \mathcal{Y} \times \mathcal{Y}$, such as 0-1 loss, or squared loss \citep{Weston:2002ul}, or a structured loss \citep{BenTaskar:2003tt, Tsochantaridis:2004bd,Bo:2010kd,Bratieres:2013vw}.  During prediction for a input $x^* \in \mathcal{X}$, we search for the best possible label $y^*$ so that this loss $\mathcal{L}(f(x^*),y)$ is minimized, given all of training data, and for all possible labels $y \in \mathcal{Y}$, such that,
	\begin{align}
		y^* = f(x^*) = \arg \min_{y \in \mathcal{Y}} \mathcal{L}(f(x^*),y) \nonumber
	\end{align}
	
	In case of kernel methods for structured prediction \citep{Weston:2002ul, BenTaskar:2003tt, Tsochantaridis:2004bd,Bo:2010kd, scholkopf2006predicting, nowozin2011structured}, space of functions $\mathcal{H}$ is specified by positive definite kernel functions, which further are jointly defined on the input and output space as $h((x,y), (x',y'))$. In the most common case, this kernel is factorized over the input and output domain as $k(x,x')$ and $g(y,y')$, with input and output elements $x,x' \in  \mathcal{X} $ and $y,y' \in  \mathcal{Y}$, respectively. These individual kernels map arguments data to reproducing kernel Hilbert space (RKHS) or kernel feature spaces. They are denoted by $\mathcal{K}$ and $\mathcal{G}$, respectively. Now, it is well known that performance of kernel algorithms critically depends on the choice of kernel functions ($k(x,x')$ and $g(y,y')$), and learning them is a challenging problem. In this work, we propose a method to learn kernel feature space representations over input and output data for problem of structured prediction. 
	
	Our contributions are as follows, first, we propose to use of monomial and expansion of dot product and Gegenbaur expansion of radial kernel to learn a polynomial combination kernel features over both input and output. Second, we propose an efficient, matrix-based algorithm to solve for these expansions. And third, we show state-of-the art results on synthetic and real-world datasets data using Twin Gaussian Processes of \cite{Bo:2010kd} as a prototypical kernel method for structured prediction.
	
	\subsection{Related work}
	In the seminal work of \citet{Micchelli:2005vb} showed that we can parametrize a family of kernels $\mathcal{F}$ over a compact set $\Omega$ as
	\[
		\mathcal{F} = 
		\left\lbrace 
				\int_{\Omega} G_{\omega}(x)d\omega\colon p \in \mathcal{P}(\Omega)
		\right\rbrace,
	\]
	where $\mathcal{P}(\Omega)$ is a set of all probability measures on $\Omega$, and $G(\omega)$ is a base kernel parametrized by $\omega$. To illustrate with an example, if we set $\Omega \subseteq \mathbb{R}_+$ and $G_{\omega}(x)$ as multivariate Gaussian kernel over $x$, with variance $\omega$, then $\mathcal{F}$ corresponds to a subset of the class of radial kernels.
	
	Most kernel learning frameworks in past have focussed on learning a single kernel from a family of kernels ($\mathcal{F}$) defined using above equation. All of these frameworks have focussed on  problem of classification or regression. Table \ref{tab:kernel_examples} illustrates previous work on learning kernels for different choices of $\Omega$, and base kernel $G_{\omega}(x,y)$, over data domain $\mathcal{X}$.
	\begin{table}[!htp]
	\centering
		\begin{tabularx}{1\textwidth}{ X | l } 
				\toprule
				Kernel family and base kernel $G_{\omega}(x,y)$ & Related Work \\
				\midrule
				Radial kernels (Iterative), $G_{\omega}(x,y) = e^{-\omega\NormS{x-y}}$  & \citet{Argyriou:2005hb,Argyriou:2006fa}. \\
				\midrule
				Dot product kernels, $G_{\omega}(x,y) = e^{\omega \braket{x,y}}$ & \citet{Argyriou:2005hb, Argyriou:2006fa}. \\
				\midrule
				Finite convex sum of kernels, SimpleMKL & \citet{Rakotomamonjy:2008wf}. \\
				\midrule
				Radial kernels (Semi-infinite Programing, Infinite Kernel Learning) & \citet{gehler2008infinite}\\
				\midrule
				Shift-invariant kernels (Radial  and Anisotropic), $G_{\omega}(x-y) $ & \citet{Shirazi:2010ur} \\
				\bottomrule
		\end{tabularx}
		\caption{Kernel learning frameworks which learn kernels as convex combination of base kernels $G_{\omega}(x,y)$.}
		\label{tab:kernel_examples}	
	\end{table}
	Some of the above approaches work iteratively, \citep{Argyriou:2005hb,Argyriou:2006fa} while others use optimization methods such as, semi-infinite programming \citep{OzogurAkyuz:2010ke}, or QCQP \citep{Shirazi:2010ur}). A review paper by \citet{Gonen:2011ix}  surveys many of these works on various Multiple Kernel Learning algorithms. The relevant works which uses results on polynomial expansion of kernels has been of \citet{Smola:2000tj} which learn dot product kernels using monomial basis $\{1, x, x^2, \ldots\}$, and learning shift-invariant kernels using radial base kernels of \citet{Shirazi:2010ur}, both of these have been proposed for classification and regression.
	
	 We use results on expansion of kernels for learning kernels for problem of structured regression. In propose a framework to learn kernels for structured prediction that use monomial and Gegenbaur expansions to learn a positive combination of base kernels. The Gegenbaur basis has an advantage of being orthonormal and provides a weight parameter $\gamma$ that helps avoid Gibbs phenomenon observed in interpolation \citep{Gottlieb:1992gq}.

	 Outline of this paper is as follows. In section \ref{sec:ker_trans} we describe radial and dot product kernels along and their respective polynomial expansions. In section \ref{sec:hsic} we describe the dependency criteria (HSIC) used in our problem formulation. In section \ref{sec:algo} we describe our proposed algorithm. In section \ref{sec:struct_tgp} we describe Twin Gaussian Processes used for structured prediction. In section \ref{sec:expmts} we present experimental results on several synthetic and real-world datasets. Finally, section \ref{sec:discuss} and section \ref{sec:conc} are discussion and conclusion, respectively.

\section{Kernel Transformations}
\label{sec:ker_trans}
\subsection{Radial and Dot Product Kernels}
A classical result of \citet{Schoenberg:1938fx} states that any continuous, isotropic,  kernel $k(x,x')$, is positive definite if and only if $k(x,x') = \phi(\braket{x,x'})$, and $\phi(t)$ is a real valued function such that
	\[
		\phi(t) = \sum_{i=0}^{\infty} \alpha_k G_k^\lambda (t),\hspace{1cm} t \in [-1,1] 
		\label{eqn:schoenberg}
	\]
where $\sum_{k=0}^{\infty}\alpha_k \geq 0 $, $k \in \mathcal{Z}^+$ and $\alpha_k G_k^\lambda (1) < \infty$. The symbol $G_k^\lambda (t)$ stands for what is known as ultraspherical or Gegenbaur basis polynomials. Examples of such kernels include the Gaussian and Laplacian kernels. 

To understand this better, if we look at a result of \citet{Bochner:uw} which says that, a continuous, shift-invariant kernel $k(\cdot)$ is positive definite, if and only if, it is a inverse Fourier transform of a finite non-negative probability measure $\mu$ on $\mathbb{R}^d$.
	\begin{align*}
        k(x-y) = \phi(z) &= \int_{\mathbb{R}^d} e^{ \sqrt{-1} \braket{z,s} }d\mu(s), & x, y, s \in \mathbb{R}^d \nonumber
	\end{align*}
	This results allows us to represent a shift-invariant kernel uniquely as a spectral distribution in a spectral domain. Now, if we take the Fourier transform of equation \ref{eqn:schoenberg}, we have a unique representation of positive definite kernel $\phi(t)$, and  also a unique representations of positive definite base kernels $G_k^\lambda (t)$, $k=1, 2, \ldots$, in the spectral domain. Hence, the kernel expansion is a nonnegative mixture of base spectral distributions whose kernels are given by kernels $G_k^\lambda (t)$ with nonnegative weights $\alpha_k$'s. 
	
	If we look at monomial basis instead of Gegenbaur basis, a similar interpretation can be given for dot product kernels as mentioned in \citet{Smola:2000tj}. Additionally, we know that the span of monomials form a dense basis in $L_2[-1,1]$, and by Weirerstrass approximation theorem \citep{Szego:1939tn}, any continuous function on a $[-1,1]$ can be approximated uniformly on that interval by polynomials to any degree of accuracy. The Gegenbaur basis is orthonormal and provides additional benefits in interpolation accuracy for functions with sharp changes. Thus avoiding the so called Gibbs phenomenon, \citep{Gottlieb:1992gq}.

In our work, we use above expansions of kernel $k'(\cdot,\cdot)$ on features obtained from an initial kernel $k(\cdot,\cdot)$, defined on $\mathcal{X} \times \mathcal{X}$. We refer to $k(\cdot,\cdot)$ as initial kernel, and estimate $\phi(t)$ which when applied to kernel matrix $[\mathbf{K}]_{i,j}=\braket{x,y}$ gives us a new kernel matrix $[\mathbf{K}]_{i,j}' = \phi([\mathbf{K}]_{i,j}$. Figure~\ref{fig:kerneloneway} illustrates this pictorially.

\begin{figure}
    \centering
    \includegraphics[width = 0.8\textwidth]{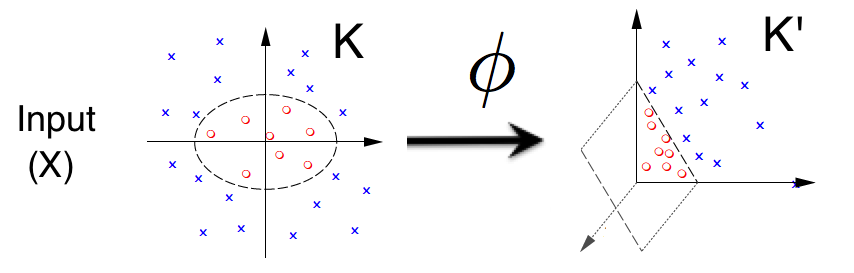}
    \caption{Finding of Kernel Transformations $\phi(\cdot)$}
    \label{fig:kerneloneway}
  \end{figure}

\subsection{Monomial Transformations}
\label{subsec:mono}
In case of the monomial basis functions we have the following expansion.
\begin{theorem}[\citet{Schoenberg:1938fx}]  
	For a continuous function $ \phi \colon [-1,1] \rightarrow \mathbb{R}$ and $k(x,y) = \phi(\braket{x,y})$, the kernel matrix $\mathbf{K}'$ defined as ${\mathbf K'} = \phi([\mathbf{K}]_{i,j})$ is positive definite for any  positive definite matrix $\mathbf{K}$ if and only if $\phi(\cdot)$ is real entire, and of the form below
	\begin{align}
        	k'(\braket{x,y}) = \phi(t) = \sum_{i=0}^{\infty}\alpha_it^i
	        \label{eqn:f_form}
	\end{align}
with $\alpha_i\geq 0$  for all $i \geq 0$.
\end{theorem}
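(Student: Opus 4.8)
The plan is to prove the two implications separately: the ``if'' direction follows quickly from the Schur product theorem, while the ``only if'' direction is the substantive part, where Schoenberg's characterisation of positive definite functions on spheres does the real work. For sufficiency, suppose $\phi(t)=\sum_{i\ge 0}\alpha_i t^i$ with all $\alpha_i\ge 0$; since $\phi$ is entire the series converges absolutely on bounded sets, and in particular $\sum_i\alpha_i=\phi(1)<\infty$. Given a positive semidefinite $\mathbf{K}$, the Schur product theorem gives that the Hadamard product of two PSD matrices is PSD, so each Hadamard power $\mathbf{K}^{\circ i}$ (entrywise $i$-th power, with $\mathbf{K}^{\circ 0}$ the all-ones matrix $\mathbf{1}\mathbf{1}^{\!\top}$) is PSD; hence the partial sums $\sum_{i\le N}\alpha_i\mathbf{K}^{\circ i}$ are nonnegative combinations of PSD matrices and thus PSD. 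The entries of $\mathbf{K}$ are bounded (by $1$ when $\mathbf{K}=(\langle x,y\rangle)$ on the unit sphere, and by $\max_k K_{kk}$ in general, since $|K_{ij}|\le\sqrt{K_{ii}K_{jj}}$), so these partial sums converge entrywise to $\phi([\mathbf{K}]_{i,j})$; since the PSD cone is closed, the limit $\mathbf{K}'=\phi([\mathbf{K}]_{i,j})$ is PSD. This is exactly the direction the algorithm relies on: any nonnegative polynomial combination of base kernels is again a valid kernel.

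For necessity, assume $\mathbf{K}'=\phi([\mathbf{K}]_{i,j})$ is PSD for every PSD $\mathbf{K}$. The first move is to supply the right test matrices: for any $d$ and any finite set $x^{(1)},\dots,x^{(m)}\in S^{d-1}$, the Gram matrix $\mathbf{K}=(\langle x^{(i)},x^{(j)}\rangle)$ is PSD with entries in $[-1,1]$, so $(\phi(\langle x^{(i)},x^{(j)}\rangle))$ is PSD; hence $\phi(\langle\cdot,\cdot\rangle)$ is a positive definite kernel on $S^{d-1}$ for every $d$. Now invoke Schoenberg's theorem on positive definite functions on spheres: on $S^{d-1}$ such a $\phi$ expands as $\phi(t)=\sum_{k\ge 0} b_{k,d}\,G_k^{\lambda_d}(t)$ with $\lambda_d=(d-2)/2$ and all $b_{k,d}\ge 0$. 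Setting $c_{k,d}:=b_{k,d}\,G_k^{\lambda_d}(1)\ge 0$ and evaluating at $t=1$ gives $\sum_k c_{k,d}=\phi(1)$ for every $d$, so the coefficient sequences are uniformly bounded in $\ell^1$. Let $d\to\infty$: the normalised Gegenbaur polynomials $G_k^{\lambda}(t)/G_k^{\lambda}(1)$ converge uniformly on $[-1,1]$ to the monomial $t^k$ as $\lambda\to\infty$, so extracting a limit of the $c_{k,d}$ (legitimate by the uniform $\ell^1$ bound) yields $\phi(t)=\sum_{k\ge 0}\alpha_k t^k$ with $\alpha_k=\lim_d c_{k,d}\ge 0$. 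Finally, allowing Gram matrices of points not confined to the unit sphere, so that $t$ ranges over all of $\mathbb{R}$, forces this power series to converge on all of $\mathbb{R}$; hence $\phi$ is real entire.

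The hard part is the $d\to\infty$ limit in the necessity argument: one must check that the per-dimension expansions are compatible enough for the monomial coefficients $\alpha_k$ to exist as (subsequential) limits and stay nonnegative, and justify exchanging the infinite sum with the limit --- the $\ell^1$ bound $\sum_k c_{k,d}=\phi(1)$ is the right lever here, essentially a weak-compactness argument for measures on $\{0,1,2,\dots\}$. An alternative that sidesteps the sphere theory is the direct matrix-analytic route (in the spirit of FitzGerald--Micchelli--Pinkus and Horn): use carefully chosen low-rank PSD matrices with entries clustered near a point $t_0$ to show $\phi\in C^\infty$ with derivatives of a fixed sign, deduce absolute monotonicity, and read off $\alpha_k\ge 0$ from the Taylor coefficients; there the difficulty migrates to the inductive construction of those test matrices and to handling the symmetry of the interval $[-1,1]$ rather than $[0,\rho]$.
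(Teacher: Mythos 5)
The paper does not prove this statement at all: it is quoted as a classical result of Schoenberg (the citation \citet{Schoenberg:1938fx} is the proof, as far as the paper is concerned), so there is no in-paper argument to compare yours against. Taken on its own terms, your reconstruction is essentially the classical one and is sound in outline. The sufficiency direction (Schur product theorem, PSD Hadamard powers, nonnegative combinations, closedness of the PSD cone under entrywise limits) is complete and correct. The necessity direction follows Schoenberg's original route: correlation matrices are Gram matrices of points on $S^{d-1}$, so $\phi$ is positive definite on every sphere, hence has a nonnegative Gegenbauer expansion in each dimension; the uniform $\ell^1$ bound $\sum_k c_{k,d}=\phi(1)$ gives the weak compactness needed to pass to $d\to\infty$, where the normalised Gegenbauer polynomials tend to monomials. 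You correctly identify this interchange of limits as the delicate step and name the right lever. One remark worth making explicit: as stated in the paper, with $\phi$ defined only on $[-1,1]$ and test matrices restricted to have entries there, the correct conclusion is a nonnegative power series converging on $[-1,1]$ (analytic on the open interval), not that $\phi$ is real entire; the ``entire'' form of the theorem requires the hypothesis to range over PSD matrices with unbounded entries, exactly as you note in your closing step. So your proof actually repairs an imprecision in the paper's statement rather than inheriting it. The alternative FitzGerald--Micchelli--Pinkus/Horn route you sketch (divided differences forcing absolute monotonicity) is also a legitimate proof strategy and avoids the spherical harmonics machinery, at the cost of a more intricate construction of test matrices.
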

So $\phi(t)$ is an infinitely differentiable and all coefficients $\alpha_j$ are non-negative (eg. $\phi(t)=e^t$).  A benefit of this monomial expansion is that it is easy to compute and can be readily evaluated in parallel.

\subsection{Gegenbaur Transformations}
\label{subsec:gegen}
We obtain Gegenbaur's basis if we insist on having a orthonormal expansion. These expansions are also more general, in the sense that they include a weight function $w(t; \gamma)$, which controls the size of the function space used for representation, controlled by a parameter $\gamma > -1/2$.

To formally state,
\begin{theorem}[\citet{Schoenberg:1938fx}]
	For a real polynomial $\phi \colon  [-1,1] \rightarrow \mathbb{R}$ and for any finite $X = \{x_1, x_2, x_3, \ldots \}$  the matrix $[\phi (\braket{x_i,x_j})]_{i,j}$ is positive definite if and only if $\phi(t)$ is a nonnegative linear combination of Gegenbauer's polynomials $G_i^{\gamma}(t)$, which is,
	\begin{align}
	\phi(t) = \sum_{i=0}^{\infty} \alpha_i G_i^{\gamma}(t)
	\label{eqn:phiG_form}
	\end{align}
\end{theorem}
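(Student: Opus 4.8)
The plan is to recognise this theorem as Schoenberg's characterisation of positive definite functions on a sphere, with the parameter $\gamma$ playing the role of $(d-2)/2$. First I would fix $d = 2\gamma+2$ and observe that the restriction $t\in[-1,1]$ (together with $\phi$ being applied to honest inner products) forces the elements of $X$ to be unit vectors, $X \subseteq S^{d-1} \subseteq \mathbb{R}^{d}$. Since $\phi$ is a real polynomial and the Gegenbauer polynomials $\{G_i^{\gamma}\}_{i\ge 0}$ form a basis of the space of univariate polynomials, there is a unique finite expansion $\phi(t)=\sum_{i=0}^{N}\alpha_i G_i^{\gamma}(t)$, so the entire content of the statement is the equivalence between positive semidefiniteness of \emph{every} matrix $[\phi(\braket{x_i,x_j})]_{i,j}$ and nonnegativity of all the $\alpha_i$ (strict positive definiteness being a mild refinement treated separately below).

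For the ``if'' direction I would invoke the addition theorem for spherical harmonics: for each degree $n$ there is a constant $\lambda_n>0$ (a positive ratio built from the surface measure of $S^{d-1}$, the dimension $N(d,n)$ of the degree-$n$ spherical harmonics, and $G_n^{\gamma}(1)$) with
\[
  G_n^{\gamma}(\braket{u,v}) \;=\; \lambda_n\sum_{k=1}^{N(d,n)} Y_{n,k}(u)\,Y_{n,k}(v),\qquad u,v\in S^{d-1},
\]
where $\{Y_{n,k}\}_k$ is an orthonormal basis of the degree-$n$ spherical harmonics. Thus each kernel $G_n^{\gamma}(\braket{\cdot,\cdot})$ is a Gram kernel, so every matrix $[G_n^{\gamma}(\braket{x_i,x_j})]_{i,j}$ is positive semidefinite, and a nonnegative combination of them is again positive semidefinite; strict positivity follows when enough $\alpha_i$ are nonzero for the concatenated feature maps to separate the $x_i$.

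For the ``only if'' direction the idea is to feed carefully chosen test weights into the positive-definiteness inequality in order to read off one coefficient at a time. Fix a degree $n$ and an index $k$, take points $x_1,\dots,x_M \in S^{d-1}$ drawn from (or forming a rotationally symmetric quadrature for) the uniform measure $\sigma$, and set the weights $c_j \propto Y_{n,k}(x_j)$. Letting $M\to\infty$ in $\sum_{j,l} c_j c_l\,\phi(\braket{x_j,x_l}) \ge 0$ turns the quadratic form into
\[
  \iint_{S^{d-1}\times S^{d-1}} Y_{n,k}(u)\,Y_{n,k}(v)\,\phi(\braket{u,v})\,d\sigma(u)\,d\sigma(v) \;\ge\; 0 .
\]
Substituting the Gegenbauer expansion of $\phi$, applying the addition theorem term by term, and using orthonormality of the $Y_{n,k}$, all degree-$m$ terms with $m\ne n$ integrate to zero and the surviving term equals $\alpha_n\lambda_n$ up to a strictly positive constant. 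Hence $\alpha_n\ge 0$, and since $n$ was arbitrary this finishes the proof.

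The main obstacle, and the one ingredient both directions lean on, is the addition theorem connecting $G_n^{\gamma}$ to spherical harmonics, together with making the limit from finite positive definite matrices to the integral inequality rigorous (choosing nodes whose empirical inner-product distribution converges to $\sigma$ and controlling $\phi$ uniformly so the limit passes through the finite sum). A secondary subtlety is the identification $d=2\gamma+2$: when $\gamma$ does not yield an integer $d$ there is no genuine sphere to work on, and one has to argue instead through a consistency/limiting argument across increasing integer dimensions (or restrict, as Schoenberg originally does, to those $\gamma$ for which a sphere exists).
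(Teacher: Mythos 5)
The paper does not actually prove this statement: it is quoted as a classical result of \citet{Schoenberg:1938fx} and used as a black box, so there is no in-paper argument to compare yours against. Judged on its own terms, your outline is the standard and correct proof of Schoenberg's characterisation of positive definite zonal kernels on $S^{d-1}$ with $\gamma=(d-2)/2$: the ``if'' direction via the addition theorem, which exhibits each $G_n^{\gamma}(\braket{u,v})$ as a Gram kernel of spherical harmonics, and the ``only if'' direction by testing the quadratic form against weights $c_j\propto Y_{n,k}(x_j)$, passing to the integral, and isolating $\alpha_n$ by orthogonality (equivalently, by the Funk--Hecke formula). Your closing caveats are also the right ones to flag: the theorem is genuinely a statement about points on the sphere (the paper's phrasing ``any finite $X$'' is too loose --- e.g.\ $G_2^{\gamma}(t)=2\gamma(\gamma+1)t^2-\gamma$ evaluated at the origin already fails to be PSD, so the points must be unit vectors, not merely in the unit ball), and non-half-integer $\gamma$ requires either restricting to $\gamma=(d-2)/2$ or an interpolation/limiting argument across dimensions. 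Two small polish points: the limit from finite PSD matrices to the double-integral inequality is most cleanly justified by noting that continuity of $(u,v)\mapsto Y_{n,k}(u)Y_{n,k}(v)\phi(\braket{u,v})$ makes the integral a limit of Riemann sums, each of which is a quadratic form of a PSD matrix; and since $\phi$ is assumed to be a polynomial, its Gegenbauer expansion is finite and the uniform-control issue you worry about disappears. No genuine gap.
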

with $\alpha_i\geq 0$, and $\sum_i a_i G_i^{\gamma}(1)<\infty $.
\begin{definition} Gegenbauer's polynomials are defined as below,
	\begin{align}
	G_0^{\gamma}(t) & =1, 
	G_1^{\gamma}(t) =2\gamma t, 
	\ldots, \\
	G_{i+1}^{\gamma}(t) & =
		\left(
			\frac{2 (\gamma + i)}{i+1} 		\right) t G_{i}^{\gamma}(t)
		- \left( \frac{2\gamma + i-1}{i+1} 	\right) G_{i-1}^{\gamma}(t)
	\end{align}
\end{definition}
As stated earlier $G_k^{\gamma}$ and $G_{l}^{\gamma}$ are orthogonal in $[-1,1]$ and all polynomials are orthogonal with respect to the { weight function } $w(t; \gamma) = (1-t^2)^{(\gamma-1/2)}$ and $\gamma > - 1/2$.
\begin{align} 
\int_{-1}^{1} G_k^{\gamma}(t) G_l^{\gamma}(t) w(t;\gamma)dt = 0 , & \hspace{5mm}  k \neq l.
\end{align}
The weight function above defines a weighted inner product on the space of functions with a norm and a inner product given by
\begin{align}
	\NormS{\phi} = \int_{-1}^{1} w(t; \gamma)\phi(t)\overline{\phi(t)}dt,  \\
	 \braket{\phi, \psi} = \int_{-1}^{1} w(t; \gamma )\phi(t)\overline{\psi(t)}dt
	\label{eqn:gegen_norm}
\end{align}

\begin{wrapfigure}{r}{0.5\textwidth}
	\includegraphics[width=1\linewidth]{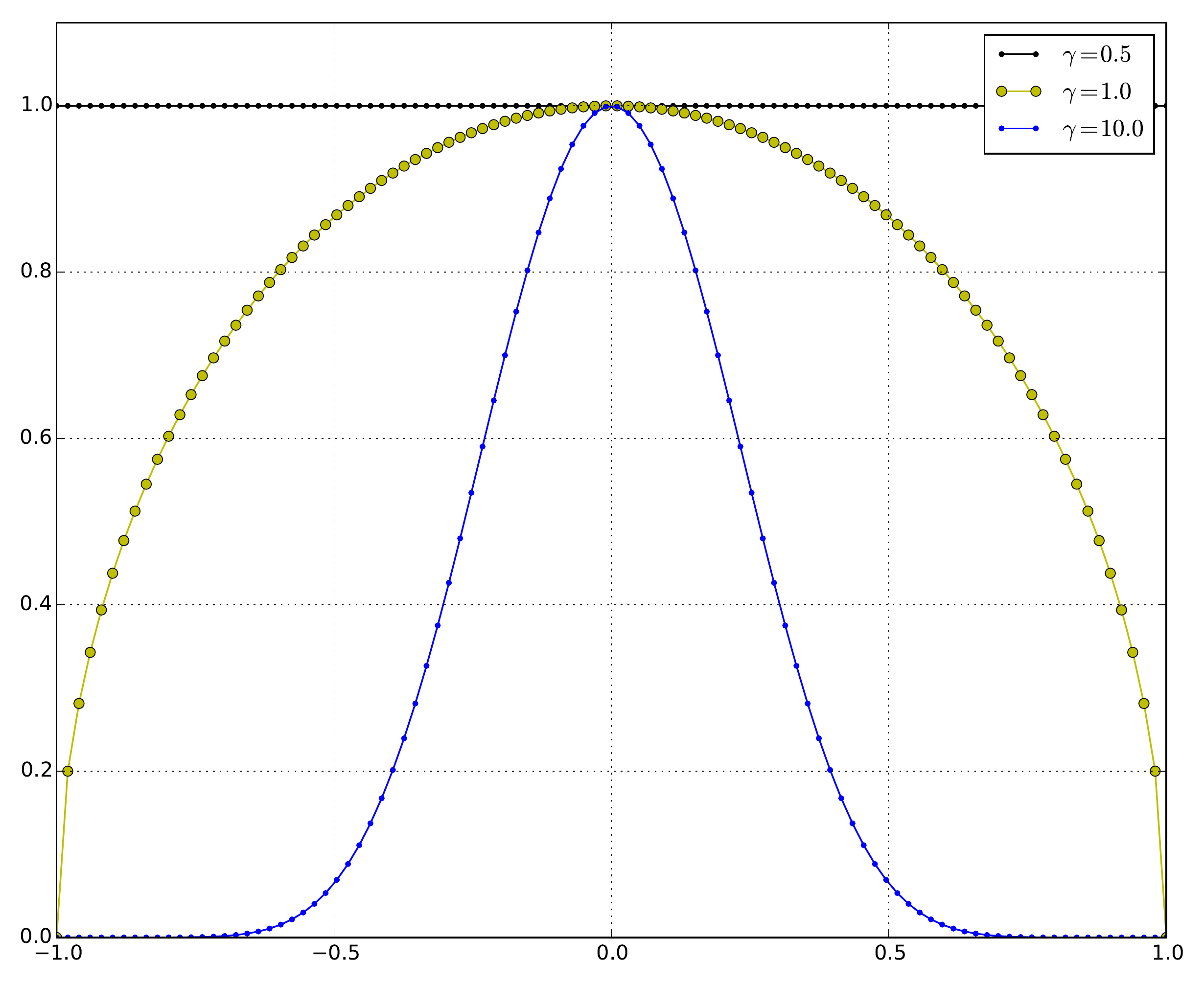}
	\caption{Weight function $w(t;\gamma)$ as  a function of $\gamma$, as we see that the $\gamma$ value controls the behavior of the weight function at the boundaries, and equivalently controlling the size of the function space, which decreases with size.}	
	\label{fig:weight_function}
\end{wrapfigure}

This weight function controls the space of functions over which we estimate these polynomial maps. Figure \ref{fig:weight_function} shows the weight function obtained by different $\gamma$ parameter.  We observe that $\gamma$ value controls the behavior of the function at the boundary points $\Set{1,-1}$. Having this weight function also improves the the quality of interpolation by avoiding the Gibbs phenomenon as further explained in \citet{Gottlieb:1992gq} .

To summarize given a base kernel matrix ${\mathbf K}$ on input data, we expand the target kernel  ${\mathbf K}'$  as a transformation $\phi(t)$ applied to ${\mathbf K}$. This maps the initial RKHS features $k(x,\cdot)$ to new RKHS features $k'(x,\cdot)$. These two forms of the polynomial mapping representations have been known before in the literature but have mostly used for non-structured prediction tasks like regression and classification. In our approach, we use these expansions on both input and output kernels so as to maximize dependence between mapped feature spaces $k'(x,\cdot)$ and $g'(y,\cdot)$, using the Hilbert Schmidt Independence Criterion (HSIC) described below.

\section{Hilbert Schmidt Independence Criterion}
\label{sec:hsic}
To measure cross-correlation or dependence between structured input and output data in kernel feature \citet{Gretton:2005gf} proposed a Hilbert Schmidt Independence criterion (HSIC). Given i.i.d. sampled input-output pair data, $\{(x_1, y_1), (x_2, y_2), \dots, (x_m,y_m)\} \sim P(x,y)$ HSIC measures the dependence between $\mathcal{X}$ and $\mathcal{Y}$.
\begin{definition}
If we have two RKHS's $\mathcal{K}$ and $\mathcal{G}$, then a measure of statistical dependence between $\mathcal{X}$ and $\mathcal{Y}$ is given by the norm of the cross-covariance operator ${\mathbf M}_{xy}: \mathcal{G} \rightarrow \mathcal{K}$, which is defined as,
\begin{align}
        {\mathbf M}_{x y} & := \mathbf{E}_{x,y}[(k(x,\cdot)-\mu_x) \otimes (g(y,\cdot)-\mu_y)] \\
        & =  \mathbf{E}_{x,y}[(k(x,\cdot) \otimes g(y,\cdot))] - \mu_x  \otimes \mu_y
\end{align}
and the measure is given by the Hilbert-Schmidt norm of ${ \mathbf M}_{xy}$ which is,
\begin{align}
        HSIC(p_{xy}, \mathcal{K},\mathcal{G}) := ||{\mathbf M}_{xy}||_{HS}^2 
\end{align}
\end{definition}
So the larger the above norm, the higher the statistical dependence between $\mathcal{X}$ and $\mathcal{Y}$. The advantages of using HSIC for measuring statistical dependence, as stated in  \citet{Gretton:2005gf} are as follows: first, it has good uniform convergence guarantees; second, it has low bias even in high dimensions; and third a number of algorithms can be viewed as maximizing HSIC subject to constraints on the labels/outputs. Empirically, in terms of kernel matrices it is defined as
\begin{definition}
        Let $Z := \{ (x_1, y_1), \ldots (x_m,y_m)\} \subseteq \mathcal{X} \times \mathcal{Y}$ be a series of m independent observations drawn from $p_{xy}$. An unbiased estimator of $HSIC(Z,\mathcal{K},\mathcal{G})$ is given by,
\begin{align}
	HSIC(Z,\mathcal{K},\mathcal{G}) = (m-1)^{-2} trace( {\mathbf K} {\mathbf H}{\mathbf G}{\mathbf H} )
\end{align}
        where ${\mathbf K}, {\mathbf H}, {\mathbf G} \in \mathbb{R}^{m \times m}$, $[{\mathbf K}]_{i,j} := k(x_i,x_j)$, $[{\mathbf G}]_{i,j} := g(y_i,y_j)$ and $[{\mathbf H}]_{i,j} := \delta_{ij} - m^{-1}$
\end{definition}

Now, for well defined {\em normalized } and {\em bounded} kernels, $\mathbf{K}$ and $\mathbf{G}$. We have $HSIC(Z,\mathcal{K},\mathcal{G}) \geq 0$. For the ease of discussion we denote ${HSIC}(Z,\mathcal{K},\mathcal{G})$ by $\overline{HSIC}( {\mathbf K} , {\mathbf G} )$. So we use HSIC to measure dependence between kernels in the target kernel feature space and estimate these transformations (${\alpha_i}'s, \beta_j's$) by maximizing it.

\section{Learning Kernel Transformations}
\label{sec:algo}
The goal in structured prediction is to predict output label $y \in \mathcal{Y}$, given a input example $x \in \mathcal{X}$, our thesis is that if output kernel feature $g'(y,\cdot)$ is more correlated (dependent) with input kernel feature $k'(x,\cdot)$, then we can significantly improve the regression performance.  We propose to use HSIC for this. Also, maximizing HSIC (or equivalently Kernel Target Alignment) has been used as an objective for learning kernels in the past \citep{2012arXiv1203.0550C}. These frameworks maximize the HSIC or alignment between input and outputs, which in our case are structured objects. \citet{shawe2002kernel} in their work provide generalization bounds and which confirm that maximizing alignment (i.e HSIC) does indeed lead to better generalization.

Following on this, if we let $\mathbf{K}' = \phi(\mathbf{K})$, and $\mathbf{G}' = \psi(\mathbf{G})$, where $\mathbf{K}$ and $\mathbf{G}$ are normalized base kernels on input and output data. Using the empirical definition $\overline{HSIC}$ we define the following objective function to optimize as,
\begin{align}
	{\mathbf L} ( {\bm \alpha}*,  {\bm \beta}*) 
	& = \max_{{\bm \alpha},{\bm \beta}} \overline{HSIC}(\phi(\mathbf{K}),\psi(\mathbf{G}))  \\
	\text{subject to\ \ }  & \alpha_i \geq 0 , \beta_j \geq 0 , \forall i,j \geq 0
\label{eqn:main_objective}
\end{align} 
In case of monomial basis as in section \ref{subsec:mono}, we can use equation \ref{eqn:f_form} on kernel matrices $\mathbf{K}$ and $\mathbf{G}$ we get equations for $\phi: {\mathbf K} \rightarrow {\mathbf K'}$ and $\psi: {\mathbf G} \rightarrow {\mathbf G'}$ as follows,
\begin{align}
        \phi(\mathbf{K}) = \sum_{i=0}^{\infty}\alpha_i \mathbf{K}^{(i)}, \alpha_i \geq 0, \ \forall  i \geq 0  \label{eqn:both_maps_i} \\
        \phi(\mathbf{G}) = \sum_{j=0}^{\infty}\beta_j \mathbf{G}^{(j)}, \beta_j\geq 0,\  \forall j \geq 0    \label{eqn:both_maps_o}
\end{align}
where $\mathbf{K}^{(i)}$ is the kernel obtained by applying the $i^{th}$ polynomial basis $t^i$, to the base kernel matrix $\mathbf{K}$ (similarly $G_k^{\lambda}(t)$ for Gegenbaur basis). Figure~\ref{fig:tkl} illustrates this. Assuming our base kernels are both bounded and normalized we also need our new kernels  $\phi({\mathbf K})$ and $\psi({\mathbf G})$ to be at-least bounded. For this purpose, we impose $l_2$-norm regularization constraint on $\alpha_i$'s and $\beta_j$'s, that is $\NormS{{\bm \alpha}}=1$  and $\NormS{{\bm \beta}}=1$. These constraints are similar to the  $l_2$-norm regularization constraint on positive mixture coefficient's in the Multiple Kernel Learning framework of \citet{2012arXiv1203.0550C}. These helps generalization to unseen test data and also avoid arbitrary increase of optimization objective so as to maximize it. 

\begin{figure}
    \centering
    \includegraphics[width = 0.7\textwidth]{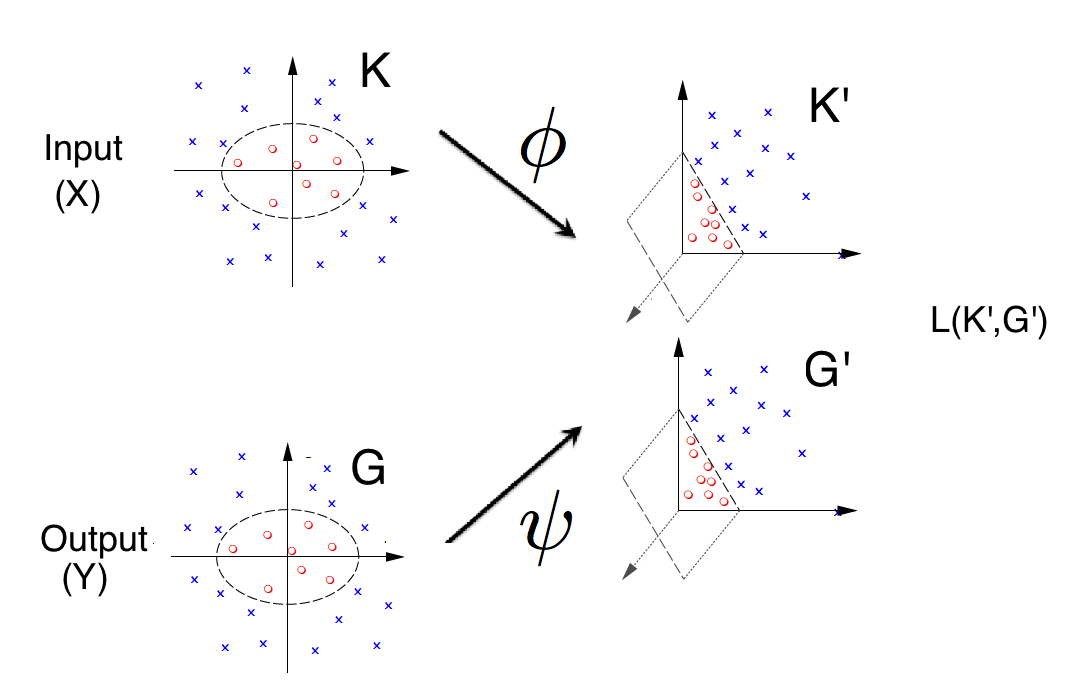}
    \caption{Twin Kernel Learning}
    \label{fig:tkl}
\end{figure}

Substituting the expansions equations \ref{eqn:both_maps_i} and \ref{eqn:both_maps_o} in equation \ref{eqn:main_objective}, and simplifying the main objective \ref{eqn:main_objective} and we get, 
\begin{align}
        \text{maximize } & \sum_{i=0}^{\infty} \sum_{j=0}^{\infty} \alpha_i\beta_j  {\mathbf C}_{i,j}\label{eqn:final} \\
        \text{subject to,  } & \Norm{{\bm \alpha }}_2=1, \Norm{{\bm \beta }}_2=1\nonumber 
\end{align}
where the ${\mathbf C}$-matrix is such that $[\mathbf{C}]_{i,j} = \overline{HSIC}({\mathbf K}^{(i)},{\mathbf G}^{(j)})$. Also, we know that the entries of the ${\mathbf C }$-matrix are non-negative. 
	The $\mathbf{C}$-matrix looks as follows,
			\begin{align}
		        \mathbf{C} & = \begin{array}{c | c c c  c}
			                 	& {K^{(0)}} &  {K^{(1)}} &  {K^{(i)}} & {K^{(d_2)}} \\
			         \hline
			        {G^{(0)}} &  C_{1,1} &  C_{1,2} &  \vdots & C_{d_2,1} \\
			        {G^{(1)}} &  C_{1,1} &  C_{1,2} &  \vdots & C_{d_2,1} \\
		    		{G^{(0)}} & C_{2,1} &  C_{1,2} &  \vdots & C_{2,d_2} \\
		    		{G^{(j)}} & \ldots &  \ldots &  {C_{i,j}} & \ldots \\
		    		{G^{(d_1)}} & C_{d_1,1} &  C_{d_1,2} &  \vdots & C_{d_1,d_2} \\
		        \end{array}
		        \label{fig:Cmatrix}
		\end{align}
		where ${{C}_{i,j}} = \overline{HSIC}({\mathbf K}^{(i)},{\mathbf G}^{(j)})$.

To further explain, every entry $[\mathbf{C}]_{i,j} = \overline{HSIC}({\mathbf K}^{(i)},{\mathbf G}^{(j)})$ represents higher order cross-correlations among the polynomial combination of features of order $i$ and $j$ between input and output, respectively. Hence by appropriately choosing coefficient's $\alpha_i$ and $\beta_j$ we are maximizing these higher order cross-correlations in the target kernel feature space. Here we approximate sum to finite degrees $d_1$ and $d_2$ which leads us to a finite dimensional problem with $deg(\phi) = d_1$ and $deg(\psi)=d_2$. This amounts to using all polynomial combinations of features up to degree $d_1$ for input and $d_2$ for output.  So higher the degree we choose more is the dependence and intuitively better is the prediction. The upper bound on the degree is computationally limited by the added non-linearity in the kernel based prediction algorithm and can lead to overfitting. So we choose these degrees empirically by cross-validation until we saturate the performance of the prediction algorithm.

 We have the following theorem regarding the solution of optimization problem \ref{eqn:final},
\begin{theorem}
	The solution $({\bm \alpha}^*,{\bm \beta}^* )$ to the optimization problem in \ref{eqn:final} is given by the first left and right singular vector of the ${\mathbf C }$-matrix
	\label{thm:soln} 
\end{theorem}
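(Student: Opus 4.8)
The plan is to recognize the (truncated) objective in (\ref{eqn:final}) as the classical variational characterization of the largest singular value of $\mathbf{C}$, and then to verify that the nonnegativity constraints $\alpha_i\ge 0$, $\beta_j\ge 0$ inherited from the Schoenberg/Gegenbauer expansions are met automatically because $\mathbf{C}$ is entrywise nonnegative. So there are really two parts: an SVD argument for the norm-constrained problem, and a Perron–Frobenius argument to handle the sign constraints.

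First I would write the $d_1\times d_2$ objective compactly as $f(\bm\alpha,\bm\beta)=\bm\alpha^{\top}\mathbf{C}\,\bm\beta$ on the feasible set $\Norm{\bm\alpha}_2=\Norm{\bm\beta}_2=1$, $\bm\alpha,\bm\beta\ge 0$, and temporarily drop the sign constraints. Using the singular value decomposition $\mathbf{C}=\sum_k\sigma_k\mathbf{u}_k\mathbf{v}_k^{\top}$ with $\sigma_1\ge\sigma_2\ge\cdots\ge 0$ and $\{\mathbf{u}_k\}$, $\{\mathbf{v}_k\}$ orthonormal, I expand $\bm\alpha=\sum_k a_k\mathbf{u}_k$ and $\bm\beta=\sum_k b_k\mathbf{v}_k$ with $\sum_k a_k^2=\sum_k b_k^2=1$, so that $f=\sum_k\sigma_k a_k b_k\le \sigma_1\sum_k|a_k||b_k|\le\sigma_1\bigl(\sum_k a_k^2\bigr)^{1/2}\bigl(\sum_k b_k^2\bigr)^{1/2}=\sigma_1$ by Cauchy–Schwarz, with equality exactly when $\bm\alpha=\mathbf{u}_1$ and $\bm\beta=\mathbf{v}_1$. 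This already gives the stated form of the optimizer for the norm-only problem and identifies the optimal value as $\sigma_1(\mathbf{C})$.

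Next I would restore nonnegativity. Since $[\mathbf{C}]_{i,j}=\overline{HSIC}(\mathbf{K}^{(i)},\mathbf{G}^{(j)})\ge 0$, the symmetric block matrix $\widetilde{\mathbf{C}}=\left(\begin{smallmatrix}\mathbf{0}&\mathbf{C}\\ \mathbf{C}^{\top}&\mathbf{0}\end{smallmatrix}\right)$ (equivalently $\mathbf{C}^{\top}\mathbf{C}$) is entrywise nonnegative, so by the Perron–Frobenius theorem its dominant eigenvector — hence the leading singular pair $(\mathbf{u}_1,\mathbf{v}_1)$ — can be chosen with all coordinates nonnegative. Thus $(\bm\alpha^{*},\bm\beta^{*})=(\mathbf{u}_1,\mathbf{v}_1)$ is feasible for the constrained problem, and since it attains the unconstrained maximum $\sigma_1$ it is a fortiori optimal for the more restrictive one. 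If $\sigma_1$ is simple the optimizer is unique up to a joint sign flip, which the constraint fixes; if $\sigma_1$ has higher multiplicity, one simply takes the nonnegative representative guaranteed above.

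I expect the only genuinely delicate point to be this last reconciliation of the sign constraints: the bare SVD bound produces a singular pair whose coordinates could a priori have mixed signs, and one must invoke entrywise nonnegativity of $\mathbf{C}$ itself (not merely positive semidefiniteness of $\mathbf{C}^{\top}\mathbf{C}$) to obtain a nonnegative extremizer consistent with the Schoenberg expansion. The rest — rewriting the objective as a bilinear form, the Cauchy–Schwarz estimate, and the truncation to finite degrees $d_1,d_2$ already carried out in the text — is routine.
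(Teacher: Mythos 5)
Your proposal is correct and follows essentially the same route as the paper: identify the optimizer with the leading singular pair of $\mathbf{C}$ and invoke the Perron--Frobenius theorem for the entrywise-nonnegative matrices $\mathbf{C}^{\top}\mathbf{C}$ and $\mathbf{C}\mathbf{C}^{\top}$ to ensure the sign constraints are satisfied. The only difference is one of completeness: the paper's proof simply asserts that the Perron/singular vectors maximize the objective, whereas you supply the Cauchy--Schwarz variational argument and the reconciliation of the nonnegativity constraints explicitly, which the paper leaves implicit.
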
 
\begin{proof} Using Perron-Frobenius theorem \citep{Chang:2008tg} for square non-negative matrices $ {\mathbf C}^T{\mathbf C} $ and $ {\mathbf C} {\mathbf C}^T $, we claim that both $ {\mathbf C}^T{\mathbf C} $ and $ {\mathbf C} {\mathbf C}^T $ have Perron vectors ${\bm \alpha}^*$ and ${\bm \beta}^* $, respectively. Both ${\bm \alpha}^*$ and ${\bm \beta}^* $ are the left and right singular vectors of ${\mathbf C}$ and also maximize Eq. \ref{eqn:final}.
\end{proof}
This above theorem gives us our required solution to the problem. Hence, to solve for the unknown's $\alpha_i$'s and $\beta_j$'s we do Singular Value Decomposition (SVD) of the $\mathbf{C}$-matrix and choose ${\bm \alpha} $ and ${\bm \beta} $ to be the first left and right singular vectors Theorem~\ref{thm:soln}. The non-negativity of the ${\bm \alpha}$ and ${\bm \beta}$ vector is guaranteed due to non-negativity of the ${\bm C}$-matrix combined with Perron-Frobenius theoremm \citet{Chang:2008tg}. 

We also observe that $\alpha_0 = \beta_0= 0$., so choosing $d_1 = 1$ corresponds to using the identity mapping $\phi(t) = t$ on the input kernel, which corresponds to using the initial kernel only, or equivalently, no mapping on input kernel. A similarly logic also applies if we set $d_2=1$, then we have $\psi(t) = t$ and no mapping on output base kernel. This is interesting because if we set $d_2=1$ and $d_1$ to be some arbitrary value greater than one, then solution ${\bm \alpha}^*$ is exactly where $\alpha_i^* \propto \overline{HSIC}({\mathbf K}^{(i)},{\mathbf G})$, which same as choosing coefficients based on kernel alignment as in \citet{2012arXiv1203.0550C}.

We also like to point out the similarity of our proposed objective to that of Kernel Canonical Correlation Analysis (KCCA) objective, which also uses HSIC \citep{Chang:2013wq}. In KCCA we find two nonlinear mappings $\phi(\cdot) \in \mathcal{K}$ and $\psi(\cdot) \in \mathcal{G}$ from their  prespecified RKHS's maximizing statistical correlation. In our approach, we also look for analytical kernel transformations $\phi(\cdot)$ and $\psi(\cdot) $ on initial kernel matrices to maximize the same objective.

\section{Twin Gaussian Processes}
\label{sec:struct_tgp}
Twin Gaussian Processes (TGP) of \cite{Bo:2010kd}, are a recent and popular form of structured prediction methods, which model input-output domains using Gaussian processes with covariance functions, represented by $\mathbf{K}$ and $\mathbf{G}$. These covariance matrices encode prior knowledge about the underlying process that is being modeled. In many real world applications data is high dimensional and highly structured, and the choice of kernel functions is not obvious. In our work, we aim to learn kernel covariance matrices simultaneously. We use TGP as an illustrative example to demonstrate the benefits of learning them. Although we note that this framework is not limited only to the use of Twin Gaussian Processes.

In TGP choice of the auxiliary evaluation function is typically some form of information measure, e.g. KL-Divergence or HSIC which are known  to be special cases of Bregman divergences (See \cite{Banerjee:2005vsa}). KL-Divergence is an asymmetric measure of information, while HSIC is symmetric in its arguments. We refer to two versions of Twin Gaussian Processes below corresponding to each of these measures of information. We refer to them as TGP with KL-Divergence or simply TGP, and TGP with HSIC for TGP.

\textbf{TGP with  KL-Divergence}: In this version of TGP, we minimize the KL-divergence between the kernels,  $\bm{K}$ and  $\bm{G}$, given the training data $\mathcal{X} \times \mathcal{Y}$ and test example $x^*$. The prediction function for HOTGP is,
\begin{align}
        y^* = &\arg\min_{y}  D_{KL}(({\bm G}_{Y \cup y}|| {\bm K}_{X \cup x^*})
        \label{eqn1:predkld}
\end{align}

\textbf{TGP with HSIC}: For this version of TGP with HSIC criteria, the prediction function maximizes the HSIC between the kernels $\bm{K}$ and  $\bm{G}$ given the training data $(\mathcal{X} \times \mathcal{Y})$, and test example $x^*$. The prediction function looks as follows, 
\begin{align}
        y^* = &\arg\max_{y}  \overline{HSIC}({\bm G}_{Y \cup y} ,  {\bm K}_{X \cup x^*})
\end{align}

\subsection{Modified Twin Gaussian Processes (TGP)}
We also both of these criteria propose above against degree of mapping $d_1$ and $d_2$. This allows us to show how each information measure is affected as the mapping degrees $d_1$ and $d_2$ are increased. The relationship we observe is straightforward and direct, allowing the choice of $d_1$ and $d_2$ to be made easily. We refer to these new modified TGP's as Higher Order TGP with KL-Divergence (HOTGP)  and Higher Order HSIC (HOHSIC) for TGP using HSIC.

\textbf{Modified TGP with  KL-Divergence}: In this version of TGP, we minimize the KL-divergence between the transformed kernels,  $\phi(\bm{K})$ and  $\psi(\bm{G})$, given the training data$\mathcal{X} \times \mathcal{Y}$, and test example $x^*$. The prediction function for HOTGP is,
\begin{align}
        \mathbf{y}^* = &\arg\min_{y}  D_{KL}((\psi({\bm G}_{Y \cup y})|| \phi({\bm K}_{X \cup x^*}))
\end{align}

\textbf{Modified TGP with HSIC}: For this version of TGP with HSIC criteria, the prediction function maximizes the HSIC between the transformed kernels $\phi(\bm{K})$ and  $\psi(\bm{G})$ given the training data $\mathcal{X} \times \mathcal{Y}$, and test example $x^*$. The prediction function looks as follows, 
\begin{align}
        \mathbf{y}^* = &\arg\max_{y}  \overline{HSIC}((\psi({\bm G}_{Y \cup y}) ,  \phi({\bm K}_{X \cup x^*}))
\end{align}

\section{Experiments}
\label{sec:expmts}
We show empirical results using Twin Gaussain Processes with KL-Divergence and HSIC, and using both monomial and Gegenbaur transformations on synthetic and real-world datasets. To measure improvement in performance over the baseline we look at empirical reduction in error which we call {\em \% Gain } defined as
\[ 
	\%\ Gain = \left (1- \frac{Error_{(mapping)}}{Error_{(no \ mapping)}} \right) \times 100.
\]
In all our experiments, we use Gaussian kernels $k(x_i,x_j) = exp(-\gamma_x||x_i-x_j||^2)$  and $g(y_i,y_j) = exp(-\gamma_y||y_i-y_j||^2)$ as base kernels on input and output, respectively. The bandwidth parameters $\gamma_x$ and $\gamma_y$ were chosen using cross-validation using base kernel on the original dataset. The weight parameters were chosen to be $\lambda_1=0.51$ and $\lambda_2=0.52$ using rough estimates from expressions in \citet{Gottlieb:1992gq} and validated on validation set. For choice of expansion degree we increase degrees $d_1$ and $d_2$ until {\em\% Gain} saturates on the cross-validation. We learn kernel transformations $\phi(\cdot)$  and $\psi(\cdot)$ using above proposed approach. 
\subsection{Datasets}
\subsubsection{Synthetic Data}
\begin{wrapfigure}{r}{0.3\textwidth}
        \includegraphics[width=1\linewidth]{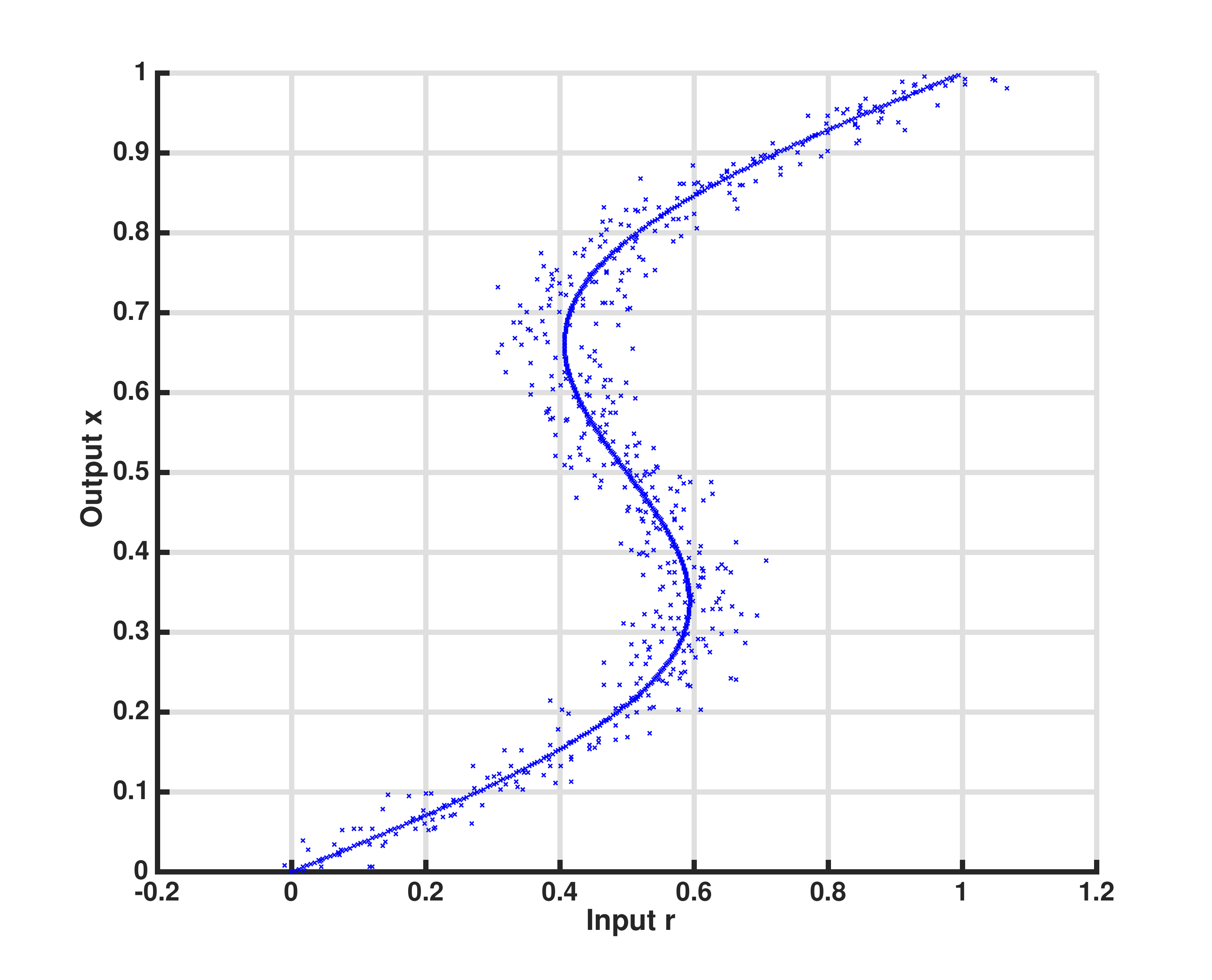}
        \caption{S-Shape  dataset.}
        \label{fig:Sshapedata}
\end{wrapfigure}
\textbf{S Shape Dataset}:  The S-shape synthetic dataset from \citet{Bishop:2002uk} is a simple 1D input/output regression problem. In this dataset, 500 values of inputs ($x$) are sampled uniformly in $(0,1)$, and evaluated for $r = x + 0.3 sin(2\pi x) + \epsilon$ with $\epsilon$ drawn from a zero mean Gaussian noise with standard deviation $\sigma = 0.05$ (Figure~\ref{fig:Sshapedata}). Goal here is to solve the inverse problem which is to predict $x$, given $r$. This dataset is challenging in the sense that it is multivalued (in the middle of the S-shape), discontinuous (at the boundary of uni-valued and multivalued region), and noisy ($\epsilon= \mathcal{N}(0,\sigma)$). The error is metric is used is mean absolute error (MAE).

\textbf{Poser Dataset}: Poser dataset contains synthetic images of human motion capture sequences from Poser 7 \citet{Anonymous:QTlchATe}. These motion sequences includes 8 categories: walk, run, dance, fall, prone, sit, transitions and misc. There are 1927 training examples coming from different sequences of varying lengths and the test set is a continuous sequence of 418 time steps. Input feature vectors are 100d silhouette shape descriptors while output feature vectors are 54d vectors with $x,y$ and $z$ rotation of joint angles. Error metric is mean absolute error (MAE) in mm.
\subsubsection{Real-world data}
\begin{wrapfigure}{r}{0.3\textwidth}
	\centering
	\includegraphics[width=1\linewidth]{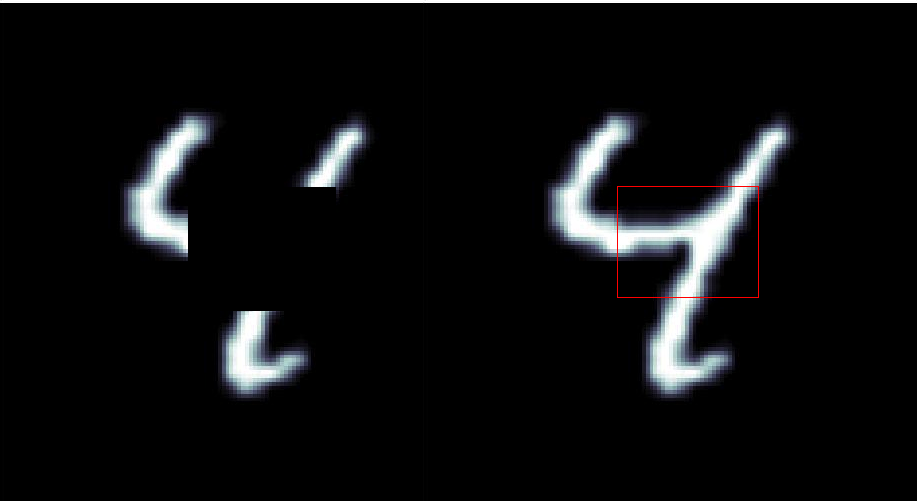}
	\caption{USPS digits.}
	\label{fig:MNIST_pic}
\end{wrapfigure}
\textbf{USPS Handwritten Digits Reconstruction} (Figure~\ref{fig:MNIST_pic}): 
In USPS handwritten digit reconstruction dataset from \citet{Weston:2002ul} our goal is to predict 16 pixel values at center of an image given outer pixels. We use 7425 examples for training (without labels) and 2475 examples (roughly 1/4th for each digit) for testing. Error metric here is reconstruction error measured using mean absolute error (MAE).
\textbf{HumanEva-I Pose Dataset} (Figure~\ref{fig:HumanEvapic}): 
HumanEva-I dataset from \citet{Sigal:2006uw}  is a challenging dataset that contains real motion capture sequences from three different subjects (S1,S2,S3) performing five different actions (Walking, Jogging, Box, Throw/Catch, Gestures).  We train models on all subjects and all actions. We have input images from three different cameras; C1,C2 and C3 and  we use HoG features from \citet{Anonymous:PZi8PH49} on them. The output vectors are 60d with the $x,y,z$ joint positions in mm.  We report results using concatenated features from all three cameras (C1+C2+C2) and also individual features from each individual camera (C1,C2 or C3).

\subsection{Results}
\begin{wrapfigure}{r}{0.3\textwidth}
	\centering
	\includegraphics[width=1\linewidth]{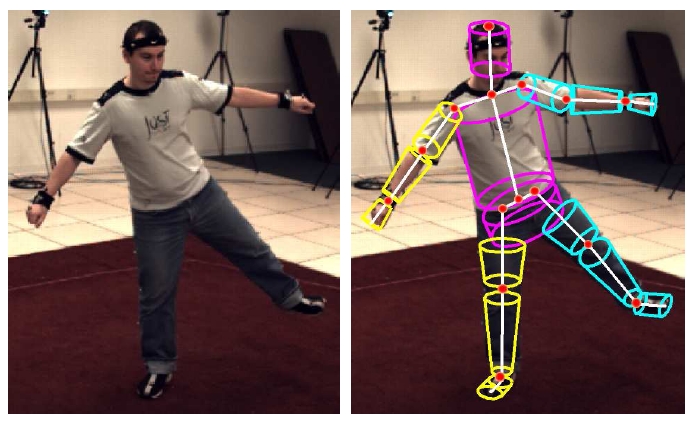}
	\caption{HumanEva-I}
	\label{fig:HumanEvapic}
\end{wrapfigure}
\subsubsection{Synthetic data}
\textbf{S-shape data}: We choose bandwidth parameter to be $\gamma_x=1$ and $\gamma_y=1$ using cross-validation. To illustrate effect of increasing degrees $d_1$ and $d_2$, we run our results on a grid of degrees from the set $\{1,2,3,5,7,11 \}$. In figure~\ref{fig:maps_all} we plot  increase in mapping degree as \% Gain. Figures~\ref{fig:tgp_kl_mono}~and~\ref{fig:tgp_kl_gegen} show results for using KL-Divergence as optimization criteria. Figures~\ref{fig:tgp_hsic_mono}~and~\ref{fig:tgp_hsic_gegen} show results for using HSIC as an optimization criteria. We observe that for both as we increase $d_1$ and $d_1$, {\% Gain} increases i.e. mean absolute error (MAE) reduces. Also for each pair of figures, for each objective, changing from a monomial basis to Gegenbaur basis helps improve {\% Gain} from {\em 31.27\%} to {\em39.49\%} for KL-Div, and  {\em22.31\%} to {\em26.06\%} for HSIC.
\begin{figure}
        \centering
        \begin{subfigure}[b]{0.49\linewidth}
                \includegraphics[width=0.9\linewidth]{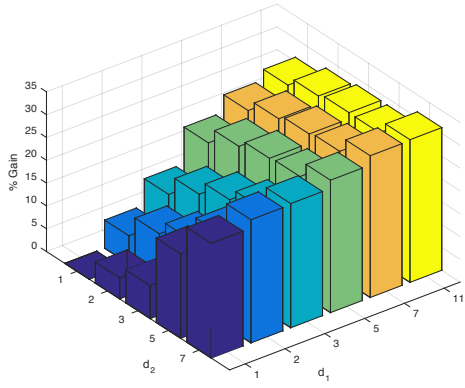}
                \caption{TGP (KL-Div) with Monomial Basis}
                \label{fig:tgp_kl_mono}
        \end{subfigure}%
        \begin{subfigure}[b]{0.49\linewidth}
                \includegraphics[width=0.9\linewidth]{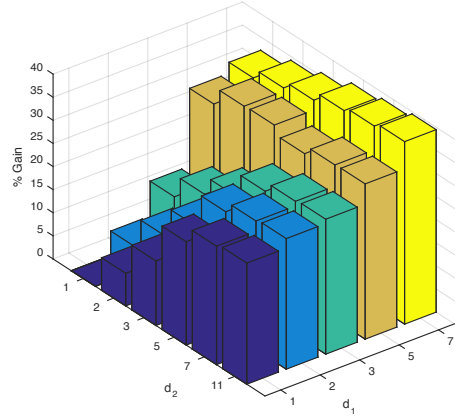}
                \caption{TGP (KL-Div) with Gegenbaur Basis}
                \label{fig:tgp_kl_gegen}
        \end{subfigure} \\
        \begin{subfigure}[b]{0.49\linewidth}
                \includegraphics[width=0.9\linewidth]{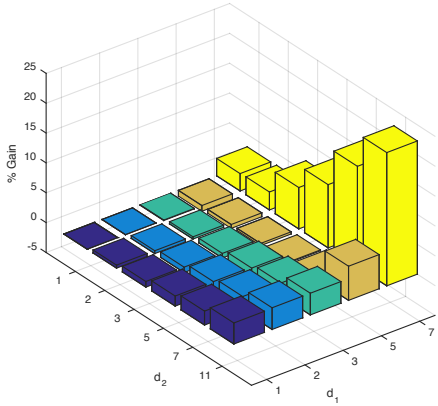}
                \caption{TGP (HSIC) with Monomial Basis}
                \label{fig:tgp_hsic_mono}
        \end{subfigure} 
        \begin{subfigure}[b]{0.49\linewidth}
                \includegraphics[width=0.9\linewidth]{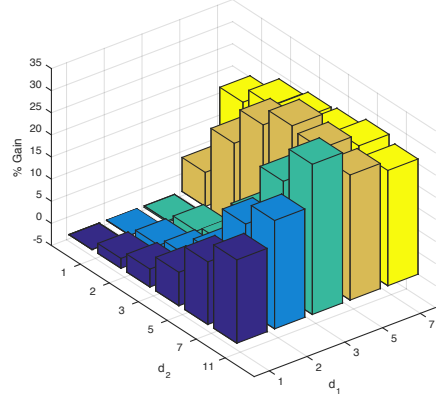}
                \caption{TGP (HSIC) with Gegenbaur Basis}
                \label{fig:tgp_hsic_gegen}
        \end{subfigure}
        \caption{{\em\% Gain} for TGP with KL-Div figures \ref{fig:tgp_kl_mono} and \ref{fig:tgp_kl_gegen} and HSIC (Figures~\ref{fig:tgp_hsic_mono} and \ref{fig:tgp_hsic_gegen}) using monomial and Gegenbaur basis on left and right, respectively.}
        \label{fig:maps_all}
\end{figure}
\begin{table}[!htp]
\centering
\begin{tabular}{ l  c  c c p{1.25cm} }
\toprule
Criterion  & $(d_1,d_2)$ & MAE (w/o map)           & MAE (w/map)     & Gain  \% \\
\midrule
KL-Div & (1,11) & 57.03 & 41.57 & {\bf 27.25}\% \\
\midrule
HSIC & (1,11) 	& 48.08 & 38.71	& {\bf 19.48}\% \\
\midrule
KL-Div (Gegen.) & (1,11) & 57.02 & 44.19 & {\bf  22.50}\%  \\
\midrule
HSIC (Gegen.) & (1,23) & 48.08 & 35.55 & {\bf 36.02}\% \\
\bottomrule
\end{tabular}
\caption{Root Mean Absolute Error for the Poser dataset for the two criteria's of TGP using monomial and Gegenbaur transformation.}
\label{tab:poser}
\end{table}

\textbf{Poser}: For Poser dataset we choose bandwidth to be $\gamma_x=10$ and $\gamma_y=10^{-5}$ using cross-validation. The final results are shown in Table \ref{tab:poser}. In this case we observe that for both basis input degree turns out be $d_1=1$, and using Gegenbaur basis gives us better results of {\em 57.02\%} at $d_2=11$, when compared to monomial basis with  {\em 48.08\%} at $d_2=23$. Here we distinctly observe benefits of using Gegenbaur basis in terms better accuracy and better numerical stability with a lower output degree $d_2$.

\subsubsection{Real-world data}
\textbf{Handwriting Recognition}:
We report our results with bandwidth parameters $\gamma_x=2e10^{-7}$ and $\gamma_y=2e{10}^{-2}$. The mapping degrees were chosen for HOTGP were $(d_1,d_2) = (11,11)$ and for HOHSIC were $(d_1,d_2) = (23,23)$, using cross validation on MAE criteria. Table~\ref{tab:usps} shows summary of results and compares our approach with other kernel-based structured prediction methods. We observe two lowest scores to be from Twin Gaussian process using HSIC with monomial basis, $(d_1,d_2) = (11,11)$, and KL-Divergence with Gegenbaur basis, $(d_1,d_2)=(23,23)$. {\% Gain} shows that using Gegenbaur basis leads to better same results for lower degree over baseline with no-mapping. Best accuracy is obtained for both objective criteria.

\begin{table}
\centering {
\begin{tabular}{ l  c l | l c l  }
\toprule
Method			& $(d_1,d_2)$ &  MAE 	& Method 	& $(d_1,d_2)$ &  MAE\\
\midrule
NN 					& / 			& 0.341 	& KRR 			& / 					& 0.250\\
\hline
SVR 					& / 			& 0.250 	& KDE 			& /					& 0.260\\
\hline
$SOAR_{krr}$ 	& / 			&  0.233 	& $SOAR_{svr}$ & /	& 0.230\\
\hline
HSIC 				& (1,1)		&  0.3399 & HSIC 		& (11,11)			& {\em 0.195} ({\bf 2.50\%}) \\
\hline
KL-Div 				&(1,1)		& 0.2151 & KL-Div &(11,11)		& 0.211 ({2.01})\%\\
\hline
KL-Div Gegen.	&(1,1)		& 0.2101 & KL-Div  Gegen. 		& (11,11)	& {\em0.19 } ({\bf 7.01\%})\\
\hline
HSIC Gegen. 		&(1,1)		&  0.3399 & HSIC Gegen. 			&(23,23)	& {\em0.25442} ({25.14\%})\\
\bottomrule
\end{tabular}}
\caption{ Comparison with others models from \citet{Bo:2009ew} for USPS digits reconstruction dataset. The two lowest errors are {\em emphasized} and their {\em\% Gain} \textbf{bolded}. NN means nearest neighbor regression, KDE means kernel dependency estimation \citep{Weston:2002ul} with 16d latent space obtained by kernel principal component analysis. SOAR means Structured Output Associative regression \citep{Bo:2009ew}. Note: The {\em\% Gain}  show reduction in error compared to no-mapping vs. using mapping for KL-Div and HSIC criteria and with monomial and Gegenbaur basis. }
\label{tab:usps}
\end{table}

\textbf{HumanEva-I Pose Dataset}:
We report results using concatenated features from all three cameras (C1 +C2+C2), and also features from individual camera (C1,C2 and C3). We use Gaussian kernel with $\gamma_x = \gamma_y = 10^{-4}$. For KL-divergence criteria we get $(d_1,d_2)=(1,11)$ for monomial basis, and $(d_1,d_2) = (1,5)$ for Gegenbaur basis. In case of HSIC criteria, we get $(d_1,d_2) = (11,11)$ for both monomial and Gegenbaur basis. Table~\ref{tab:humaevaI}) shows complete set of results. $\%\ Gain$ for each criteria is shown in bold. We observe in both subtables that using concatenated features (C1+C2+C3) gives us better results than using individual camera features. 

In subtable~\ref{tab:humaeva_mono}, we see results with monomials basis for both HSIC KL-Divergence criterion. In general we observe KL-Divergence to be giving better results compared to HSIC. The best results for this case is with features (C1+C2+C3) with KL-Divergence and {\em \% Gain } of $5.080 \%$. In subtable \ref{tab:humaeva_gegen} for the Gegenbaur basis we observe a significant reduction in error when using KL-divergence with  {\em \% Gain} of $99.95\%$. We show consistent improvement in performance with this expansion and much better results using Gegenbaur expansion. We provide detailed results on all subjects and actions in appendix.
\begin{table}[h]
\centering
\begin{subtable}[t]{0.45\textwidth}
	\centering
	\setlength{\tabcolsep}{0.5em}
	\renewcommand{\arraystretch}{1}
	\begin{tabular}{c c c c c }
		\toprule
		Features & Crit.  & w/map               & wo/map       &  \% Gain \\
		\midrule
		\multirow{2}{*}{\begin{tabular}[ c ]{@{}c@{}}HoG\\ (C1C2C3)\end{tabular}} & KL-Div & 45.17 & 42.88 & {\bf 5.08}\%  \\
		\cmidrule{2-5}  & HSIC & 172.66  & 172.59 & {\bf 0.05}\%  \\
		\hline
		\multirow{2}{*}{\begin{tabular}[ c ]{@{}c@{}}HoG\\ (C1)\end{tabular}} & KL-Div & 34.29 & 33.43 & {\bf 2.51}\%  \\
		\cmidrule{2-5}  & HSIC & 172.66 & 173.88 & {\bf 0.08}\%  \\
		\hline
		\multirow{2}{*}{\begin{tabular}[ c ]{@{}c@{}}HoG\\ (C2)\end{tabular}} & KL-Div & 31.99 & 31.58 & {\bf 1.29}\%  \\
		\cmidrule{2-5}  & HSIC & 172.66 & 173.88 & {\bf 0.09}\%  \\
		\hline
		\multirow{2}{*}{\begin{tabular}[ c ]{@{}c@{}}HoG\\ (C3)\end{tabular}} & KL-Div & 30.93 & 30.49 & {\bf 1.41  }\%  \\
		\cmidrule{2-5}  & HSIC &  172.66 & 172.59 & {\bf 0.09}\%  \\
		\bottomrule
	\end{tabular}
	\caption{Monomial transformation:  KL-Div - $(d_1,d_2) = (1,11)$, HSIC-$(d_1,d_2) = (11,11)$}
	\label{tab:humaeva_mono}
 \end{subtable}\hfill
\begin{subtable}[t]{0.45\textwidth}
		\centering
		\setlength{\tabcolsep}{0.5em}
		\renewcommand{\arraystretch}{1}
		\begin{tabular}{c c c c c }
		\toprule
		Features & Crit.  & w/map               & wo/map       &  \% Gain \\
		\midrule
		\multirow{2}{*}{\begin{tabular}[ c ]{@{}c@{}}HoG\\ (C1C2C3)\end{tabular}} & KL-Div 	& 25.40 & 0.011 & {\bf 99.95  }\%  \\
		\cmidrule{2-5}  & HSIC & 172.66 & 	172.29 & {\bf 0.25}\%  \\
		\hline
		\multirow{2}{*}{\begin{tabular}[ c ]{@{}c@{}}HoG\\ (C1)\end{tabular}} & KL-Div & 44.42 & 9.63	&  {\bf 77.46}\%  \\
		\cmidrule{2-5}  & HSIC & 172.66 & 172.28 & {\bf 0.26}\%  \\
		\hline
		\multirow{2}{*}{\begin{tabular}[ c ]{@{}c@{}}HoG\\ (C2)\end{tabular}} & KL-Div & 44.68 & 12.30 & {\bf 71.71}\%  \\
		\cmidrule{2-5}  & HSIC & 172.66  & 172.27 & {\bf 0.26}\%  \\
		\hline
		\multirow{2}{*}{\begin{tabular}[ c ]{@{}c@{}}HoG\\ (C3)\end{tabular}} & KL-Div & 44.35 & 0.01 & {\bf 99.97}\%  \\
		\cmidrule{2-5}  & HSIC & 172.66 & 172.28 & {\bf 0.26}\%  \\
		\bottomrule
	\end{tabular}
	\caption{Gegenbaur transformation: KL-Div-$(d_1,d_2) = (1,5)$, HSIC-$(d_1,d_2) = (11,11)$ }
	\label{tab:humaeva_gegen}	
 \end{subtable}
\caption{ Mean Absolute Error for HumanEva-I dataset for the two criteria KL-Div and HSIC with and without mapping using both monomials transformation and Gegenbaur transformation.}
\label{tab:humaevaI}
\end{table}
 
\section{Discussion} 
\label{sec:discuss}
\begin{table}[!htp]
\centering
\begin{tabular}{ccccp{2cm}l}
\toprule
\setlength{\tabcolsep}{0.2em}
\renewcommand{\arraystretch}{0.5}
{\em \% Gain } Criterion	& S-shape	& Poser	& USPS Digits	&  HumaEva-I (C1+C2+C3) & HumanEva-I (C1,C2,C3)  \\
\toprule
KL-Div.						& 31.27 \%.	& 6.39 \% 		& 1.97 \%		& 5.08\%		&(2.51\%, 1.29\%, 1.40\%)\\
HSIC							& 22.31 \%.	& 1.26\%	& 2.11 \%		& 0.05\%		& (0.08\%, 0.09\%, 0.09\%)\\
\midrule
KL-Div. (Gegen.) 		& 39.49 \%.	& 14.31\%		& 14.31 \%	& 99.95 \% & (77.46\%,71.71\% 99.97\%)\\
HSIC (Gegen.)			& 26.06 \%.	& 7.01\%		& 7.01 \%		& 0.25 \%	& (0.26\%,0.26\%, 0.26\%)\\
\bottomrule
\end{tabular}
\caption{\% Gain for all datasets with both criteria, and using both monomial and Gegenbaur transformation.}
\label{tab:summary}
\end{table}
Table~\ref{tab:summary} provides a complete summary of results for all datasets. It is clear from experimental results that as we increase mapping degrees $d_1$ and $d_2$, we use higher order  combination of polynomial kernel features to maximize dependence between input and output, and this leads to better regression. Reduction in prediction error as indicated by the {\em \% Gain} metric. Choice of degree is done using cross-validation and kernel parameters are selected using the kernel median trick. In S-shape dataset increase in both $d_1$ and $d_2$ helps until the performance saturates, and later falls off due to numerical instability due to the added non-linearity and overfitting. 

For the case of the Poser dataset (Table \ref{tab:poser}) and for HumanEva with KL-Divergence (Table~\ref{tab:humaeva_mono} and \ref{tab:humaeva_gegen}), we see that the best performance is for $d_1$ equal to one. This amounts to choosing choosing the identity mapping/no-mapping on input, $\phi(t) = t$. As we described in section~\ref{sec:algo} this amounts to choosing coefficients proportional to kernel target alignment of \citet{2012arXiv1203.0550C} score between initial input kernel and kernels obtained from basis expansion of initial output kernel $\mathbf{G}$.

The effect of using the two different objective criteria KL-Divergence versus HSIC, we see that in many cases, KL-Divergence does better or as well as HSIC (Table \ref{tab:usps}). In terms of ease of optimization of TGP with HSIC, it turns out to be easier criteria to optimize as there is no explicit training step for it \citep{Bo:2010kd}, and it is relatively easier to compute objective than KL-Divergence.

In terms of choice of basis functions, it is clear that using Gegenbaur basis leads to better numerical stability and often times better results. In some cases like HumaEva with KL-Divergence (Table~\ref{tab:humaeva_gegen}), it does lead to using lower degree values and leads to easier optimization during prediction in TGP.

\section{Conclusions and Future Work}
\label{sec:conc}
We proposed a novel method for learning kernels using polynomial expansions of base kernels. We empirically showed that maximizing dependency between input and output kernel features leads to better performance in structured prediction. We propose an efficient matrix-decomposition based algorithm to learn these kernel transformations. We show state-of-the-art empirical results using Twin Gaussian Processes on several synthetic and real-world datasets. 

For future work, we plan to further investigate; 1) automated learning of kernel parameters $d_1,d_2$ and $\gamma$ by using distributional priors on them and optimizing for data likelihood, 2) extending this framework to multiple kernels for multi-modal and/or multi-task prediction, and 3) joint learning of kernels and prediction.

\bibliographystyle{plainnat}
\bibliography{egbib}
\clearpage
\appendix
\section{Perron-Frobenius}
\begin{theorem}[Perron O., Frobenius G. (1912) ] For $\mathbf{A}_{n \times n} \geq 0$, with spectral radius $r = \rho(\mathbf{A})$, the following statements are true.
	\begin{enumerate}
		\item $r \in \sigma(\mathbf{A})$ and $r > 0$
		\item $r$ is unique and it the spectral radius of $\mathbf{A}$
		\item $\mathbf{A}\mathbf{z}=r\mathbf{z}$ for some $\mathbf{z} \in {\bm \Delta^{n}} = \{\mathbf{x} | \mathbf{x} \geq 0 \text{ with } \mathbf{x \neq 0} \}$
		\item There is unique vector defined by 
		\begin{align}
		        \mathbf{Ap}=r\mathbf{p}, \mathbf{p} >0, \text{ and } ||\mathbf{p}||_1 = 1,
		\end{align}
		is called \textbf{Perron vector} of $\mathbf{A}$ and there are no other nonnegative vectors except for positive multiples of $\mathbf{p}$, regardless of eigenvalue.
	\end{enumerate}
	\label{thm:perron}
\end{theorem}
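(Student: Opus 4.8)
The plan is to prove the theorem under the hypothesis that $\mathbf{A}$ is \emph{irreducible} (equivalently, I may treat the entrywise-positive case $\mathbf{A}>0$ first and extend), which is the setting actually used in this paper: the matrices $\mathbf{C}^T\mathbf{C}$ and $\mathbf{C}\mathbf{C}^T$ appearing in Theorem~\ref{thm:soln} are in fact strictly positive. I would begin by remarking that the strong conclusions $r>0$ and $\mathbf{p}>0$ cannot hold for an arbitrary nonnegative matrix — the zero matrix already violates item~1 — so some such hypothesis is unavoidable, and irreducibility is the standard choice. I would then organize the argument into four blocks: existence of a nonnegative eigenvector, identification of its eigenvalue with $\rho(\mathbf{A})$, strict positivity of that eigenvector, and uniqueness up to scale.

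For \textbf{existence} (items 1 and 3) I would apply Brouwer's fixed-point theorem to the continuous self-map $f(\mathbf{x}) = \mathbf{A}\mathbf{x}/\|\mathbf{A}\mathbf{x}\|_1$ of the standard simplex $\Delta^n = \{\mathbf{x}\geq 0 : \|\mathbf{x}\|_1 = 1\}$. The map is well defined because irreducibility forbids a zero row, so $\mathbf{A}\mathbf{x}\neq 0$ on the compact convex set $\Delta^n$, and a fixed point $\mathbf{z}$ satisfies $\mathbf{A}\mathbf{z} = r\mathbf{z}$ with $r = \|\mathbf{A}\mathbf{z}\|_1 > 0$. To get \textbf{$r=\rho(\mathbf{A})$} (items 1 and 2) I would use the Collatz--Wielandt comparison: for any eigenpair $(\lambda,\mathbf{v})$ the triangle inequality gives $|\lambda|\,|\mathbf{v}| \le \mathbf{A}|\mathbf{v}|$ entrywise, and pairing with the strictly positive left Perron vector $\mathbf{q}$ (obtained by applying the existence step to $\mathbf{A}^T$, so that $\mathbf{q}^T\mathbf{A}=r\mathbf{q}^T$) yields $|\lambda|\,\mathbf{q}^T|\mathbf{v}| \le \mathbf{q}^T\mathbf{A}|\mathbf{v}| = r\,\mathbf{q}^T|\mathbf{v}|$; since $\mathbf{q}^T|\mathbf{v}|>0$ we conclude $|\lambda|\le r$, and as $r$ is itself an eigenvalue we get $r=\rho(\mathbf{A})$.

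For \textbf{strict positivity} and \textbf{uniqueness} (item 4) I would first upgrade $\mathbf{z}\ge 0$ to $\mathbf{z}>0$: from $r^{n-1}\mathbf{z} = \mathbf{A}^{n-1}\mathbf{z}$ together with the fact that irreducibility makes $(\mathbf{I}+\mathbf{A})^{n-1}$ entrywise positive, every coordinate of $\mathbf{z}$ must be positive, and normalizing to $\|\mathbf{p}\|_1=1$ fixes the scalar. For uniqueness I would take any nonnegative eigenvector $\mathbf{w}$, set $t = \min_i p_i/w_i$ over the support of $\mathbf{w}$, and note that $\mathbf{p}-t\mathbf{w}$ is a nonnegative eigenvector for $r$ with at least one vanishing coordinate; by the strict positivity just shown it must be the zero vector, so $\mathbf{w}$ is a positive multiple of $\mathbf{p}$.

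I expect the \textbf{main obstacle} to be item~4, both because uniqueness is the most delicate step and because it is literally false for a general $\mathbf{A}\ge 0$: without irreducibility the Perron root need not be simple and nonnegative eigenvectors need not be unique up to scale, as block-diagonal examples show. The cleanest resolution is to run the whole argument under irreducibility and observe that the matrices used in Theorem~\ref{thm:soln} are strictly positive, so the hypotheses hold where the paper needs them. If one further wanted \emph{algebraic} simplicity of $r$ (beyond the geometric uniqueness above), that stronger claim would require an additional resolvent or adjugate computation, which I would flag as the genuinely technical part rather than grinding through it here.
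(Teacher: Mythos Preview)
The paper does not prove this theorem. It is stated in the appendix as a classical result (attributed to Perron and Frobenius, 1912) and given no proof whatsoever; the paper merely invokes it inside the two-line proof of Theorem~\ref{thm:soln} to conclude that the top left and right singular vectors of the nonnegative $\mathbf{C}$-matrix are themselves nonnegative. So there is no ``paper's own proof'' to compare your proposal against --- your outline already goes well beyond anything the paper supplies.

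That said, your outline is a standard and essentially correct route to Perron--Frobenius under the irreducibility hypothesis you (rightly) add: Brouwer for existence on the simplex, the irreducibility upgrade $(\mathbf{I}+\mathbf{A})^{n-1}>0$ for strict positivity, the Collatz--Wielandt comparison for $r=\rho(\mathbf{A})$, and the subtraction trick for uniqueness. One organizational wrinkle worth fixing: in your $r=\rho(\mathbf{A})$ step you invoke a ``strictly positive left Perron vector $\mathbf{q}$ with $\mathbf{q}^T\mathbf{A}=r\mathbf{q}^T$,'' but at that point the existence step applied to $\mathbf{A}^T$ only gives a \emph{nonnegative} left eigenvector for \emph{some} eigenvalue $s>0$; neither $\mathbf{q}>0$ nor $s=r$ has been established yet. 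The easy repair is to run the strict-positivity argument for $\mathbf{A}^T$ first (so $\mathbf{q}>0$), then deduce $s=r$ from $\mathbf{q}^T\mathbf{A}\mathbf{z}=r\,\mathbf{q}^T\mathbf{z}=s\,\mathbf{q}^T\mathbf{z}$ with $\mathbf{q}^T\mathbf{z}>0$, and only then carry out the Collatz--Wielandt bound. Your observation that the theorem as literally stated (for arbitrary $\mathbf{A}\ge 0$) is false in items~1 and~4, and that irreducibility or strict positivity is the natural repair, is correct and should be retained.
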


	\section{Kernel gradients}
	For  data points $X = \{x_1,x_2, \ldots x_m \}$ and test data point $x$.
	\[
		\frac{ \partial \phi( K(x_1, x)) }{  \partial x^{(d)} }
			=  \frac{ \partial \phi (t) }{ \partial t } |_{ t = K(x_1,x) } 	\frac{  \partial K(x_1,x) }{  \partial x^{(d)}  } 
	\]	 
	\[
		\frac{\partial \phi(t) }{ \partial t }
		= \sum_{i=0}^{d_1} \alpha_i H_i^{\gamma} (t)
	\]
	\begin{align}
		H_0^{\gamma} (t) & = 0, H_1^{\gamma} (t)  = 2 \gamma, \\
		H_{i+1}^{\gamma} (t) & = 
			\left( \frac{2 (\gamma + i)}{i+1} \right) \left( tH_{i}^{\gamma} (t) + G_{i}^{\gamma}(t)\right) 
			- \left( \frac{2\gamma + i-1}{i+1} 	\right) H_{i-1}^{\gamma} (t) 
	\end{align}
		\[
		\frac{\partial \phi(K(X,x) ) }{\partial x^{(d)}} = 
		\left[
			\begin{array}{c}
				\frac{\partial \phi (t)}{\partial t} |_{t = K(x_1,x)} \frac{\partial K(x_1,x) }{\partial x^{(d)}}\\
				\frac{\partial \phi (t)}{\partial t} |_{t = K(x_2,x)} \frac{\partial K(x_2,x) }{\partial x^{(d)}}\\
				\vdots \\
				\frac{\partial \phi (t)}{\partial t} |_{t = K(x_m,x)} \frac{\partial K(x_m,x) }{\partial x^{(d)}}\\
			\end{array}
		\right]
	\]
	\subsection{RBF kernel}
	\[
		K(x_i,x_j) = e^{- \gamma \NormS{x_i-x_j}}
	\]
	\[
		\frac{\partial K(X,x) }{\partial x^{(d)}}= 
		\left[ 
			\begin{array}{c}
				- 2\gamma(-x^{(d)}_1+x^{(d)})K(x,x_1) \\
				- 2\gamma(-x^{(d)}_2+x^{(d)})K(x,x_2) \\
				\vdots \\
				- 2\gamma(-x^{(d)}_m+x^{(d)})K(x,x_m)
			\end{array}
		\right]
	\]
	
	\[
		\frac{
				\partial K(x_1, x) 
			}{ 
				\partial x^{(d)} 
			} 
			=  - 2 \gamma  (- x^{(d)}_1 +  x^{(d)} )  K(x,x_1) 
	\]
	\subsection{Linear kernel}
		\[
		K(x_i,x_j) = \gamma \braket{x_i, x_j}
	\]
	\[
		\frac{\partial K(X,x) }{\partial x^{(d)}}= 
		\left[ 
			\begin{array}{c}
				\gamma x^{(d)}_1)\\
				\gamma x^{(d)}_2\\
				\vdots \\
				\gamma x^{(d)}_m
			\end{array}
		\right] 
	\]

\section{Additional Results}
\begin{table}[!htp]
\centering {
\begin{tabular}{ l  c   c  c  c }
\toprule
Crit. / Mean Abs. Er  & (no mapping)           & (mapping)     & Gain  \% \\
\midrule
KL-Div (11,11) & 0.2151 & 0.21078 & {\bf 2.008 }\% \\
\midrule
HSIC (11,11) 	& 0.3399 & 0.19536	& {\bf 2.5007 }\% \\
\midrule
KL-Div (Gegen.) (11,11) & 0.2101 & 0.19536 & {\bf 7.0096 }\%  \\
\midrule
HSIC (Gegen.) (23,23) & 0.3399 & 0.25442 & {\bf  25.1441\% }\% \\
\bottomrule
\end{tabular}}
\label{tab:uspsoursonly}
\caption{Mean Absolute Error for {\em USPS Handwritten digits } dataset for the two criteria, with and without mapping. }
\end{table}

\begin{table}[!htp]
\centering
\begin{sideways}
\setlength{\tabcolsep}{0.25em}
\renewcommand{\arraystretch}{1}
\begin{tabular}{| c | c |c | c | c |c |c |c |c |c |c | }
\toprule
\multirow{2}{*}{Features}  & \multirow{2}{*}{Motions} & \multicolumn{3}{c|}{Subject 1}  & \multicolumn{3}{c|}{Subject 2} & \multicolumn{3}{c|}{Subject 3} \\ 
\cline{3-11}  & & TGP & HOTGP & \% Gain & TGP & HOTGP & \% Gain & TGP & HOTGP & \% Gain \\  \midrule

\multicolumn{1}{|c|}{\multirow{6}{*}{\begin{tabular}[c]{@{}c@{}}HoG\\ (C1C2C3)\end{tabular}}}
& \multicolumn{1}{|c|}{Walking}
&  41.0491   &   38.0943  &  \textbf{ 7.1984}&  27.3651   &   25.1213  &  \textbf{ 8.1998}& 49.5457   &   46.8701  &  \textbf{ 5.4003} \\
& \multicolumn{1}{|c|}{Jog}
&  48.3995   &   47.1128  &  \textbf{ 2.6585}&  37.7046   &   35.4643  &  \textbf{ 5.9417}& 39.8883   &   37.3239  &  \textbf{ 6.4288} \\
& \multicolumn{1}{|c|}{Gestures}
&  16.3853   &   14.5804  &  \textbf{ 11.0155}&  46.6891   &   44.3624  & \textbf{4.9833}& 60.9779   &   59.2251  &  \textbf{ 2.8745} \\
& \multicolumn{1}{|c|}{Box}
&  39.2996   &   37.2232  &  \textbf{ 5.2837}&  48.1626   &   45.4181  & \textbf{5.6983}& 44.3865   &   41.5619  &  \textbf{ 6.3637} \\
& \multicolumn{1}{|c|}{ThrowCatch}
&  84.0748   &   81.9438  &  \textbf{ 2.5346}&  48.492   &   45.9944  & \textbf{5.1504}&  /   &   /  &  / \\
& \multicolumn{1}{|c|}{Average}
 &  45.8417 &  43.7909 &  \textbf{ 5.7381} &  41.6827 &  39.2721 & \textbf{5.9947} & 48.6996 &  46.2453 &  \textbf{ 5.2668}  \\ \midrule
 
\multicolumn{1}{|c|}{\multirow{6}{*}{\begin{tabular}[c]{@{}c@{}}HoG\\ (C1)\end{tabular}}}
& \multicolumn{1}{|c|}{Walking}
&  30.7786   &   29.7702  &  \textbf{3.2765}&  20.2406   &   19.4853  & \textbf{3.7317}& 38.259   &   37.2519  &  \textbf{ 2.6324} \\
& \multicolumn{1}{|c|}{Jog}
&  35.2833   &   34.3592  &  \textbf{2.6191}&  27.5999   &   26.6086  & \textbf{3.5918}& 26.9919   &   26.0356  &  \textbf{ 3.5429} \\
& \multicolumn{1}{|c|}{Gestures}
&  5.3718   &   5.2444  &  \textbf{2.3714}&  39.9867   &   38.8397  & \textbf{2.8685}& 43.4735   &   42.8049  &  \textbf{ 1.5378} \\
& \multicolumn{1}{|c|}{Box}
&  26.9904   &   26.3226  &  \textbf{2.4743}&  40.9108   &   39.764  &  \textbf{2.8032}& 32.2198   &   31.3732  &  \textbf{ 2.6276} \\
& \multicolumn{1}{|c|}{ThrowCatch}
&  70.661   &   69.4619  &  \textbf{1.6971}&  41.2717   &   40.6461  &  \textbf{1.5159}&  /   &   /  &  / \\
& \multicolumn{1}{|c|}{Average}
 &  33.817 &  33.0317 &  \textbf{2.4876} &  34.0019 &  33.0687 &  \textbf{2.9022} & 35.236 &  34.3664 &  \textbf{2.5852}  \\ \midrule
 
\multicolumn{1}{|c|}{\multirow{6}{*}{\begin{tabular}[c]{@{}c@{}}HoG\\ (C2)\end{tabular}}}
& \multicolumn{1}{|c|}{Walking}
&  25.6649   &   24.9178  &  \textbf{2.9112}&  17.4669   &   16.883  &  \textbf{3.3428}& 32.0536   &   31.4839  &  \textbf{1.7774} \\
& \multicolumn{1}{|c|}{Jog}
&  36.1562   &   35.8948  &  \textbf{0.7229}&  25.0174   &   24.4919  &  \textbf{2.1002}& 22.5577   &   22.0753  &  \textbf{2.1384} \\
& \multicolumn{1}{|c|}{Gestures}
&  8.2509   &   7.8924  &  \textbf{4.345}&  41.0384   &   40.7589  &  \textbf{0.6812}& 37.409    &     37.5382   &  \textcolor{red}{-0.3456} \\
& \multicolumn{1}{|c|}{Box}
&  25.952   &   25.4622  &  \textbf{1.8874}&  33.5624   &   33.103  &  \textbf{1.3689}& 35.4483   &   34.9741  &  \textbf{ 1.3379} \\
& \multicolumn{1}{|c|}{ThrowCatch}
&  69.2615   &   68.8563  &  \textbf{0.5849}&  38.06   &   37.7767  &  \textbf{0.7443}&  /   &   /  &  / \\
& \multicolumn{1}{|c|}{Average}
 &  33.0571 &  32.6047 &  \textbf{2.0903} &  31.029 &  30.6027 &  \textbf{1.6475} & 31.8672 &  31.5179 &  \textbf{ 1.227}  \\ \midrule
 
\multicolumn{1}{|c|}{\multirow{6}{*}{\begin{tabular}[c]{@{}c@{}}HoG\\ (C3)\end{tabular}}}
& \multicolumn{1}{|c|}{Walking}
&  26.4685   &   25.6706  &  \textbf{3.0143}&  16.2316   &   15.5688  &  \textbf{4.0835}& 33.3515   &   32.5203  &  \textbf{ 2.4922} \\
& \multicolumn{1}{|c|}{Jog}
&  34.4813   &   34.0262  &  \textbf{1.3199}&  26.0008   &   25.2859  &  \textbf{2.7493}& 22.5078   &   22.0039  &  \textbf{ 2.2388} \\
& \multicolumn{1}{|c|}{Gestures}
&  9.5843   &   9.2134  &  \textbf{ 3.8701}&  36.7243   &   36.3498  &  \textbf{1.0198}& 35.8515    &     37.286   &  \textcolor{red}{-4.0013} \\
& \multicolumn{1}{|c|}{Box}
&  25.9197   &   25.425  &  \textbf{ 1.9087}&  29.6339   &   29.4589  &  \textbf{0.5906}& 30.8706   &   30.269  &  \textbf{ 1.949} \\
& \multicolumn{1}{|c|}{ThrowCatch}
&  67.1347   &   65.7422  &  \textbf{ 2.0743}&  38.2297   &   38.0795  &  \textbf{0.3929}&  /   &   /  &  / \\
& \multicolumn{1}{|c|}{Average}
 &  32.7177 &  32.0155 &  \textbf{ 2.4375} &  29.3641 &  28.9486 &  \textbf{1.7672} & 30.6453 &  30.5198 &  \textbf{ 0.6696}  \\ \midrule
 
\multicolumn{2}{|c|}{Total (\% Gain) }  & \multicolumn{3}{|c}{HoG (C1+C2+C3) :   {\bf5.0796} } & \multicolumn{2}{|c|}{HoG (C1) :  {\bf 2.5147 } }        & \multicolumn{2}{|c|}{HoG (C2) : {\bf 1.2928 } }   & \multicolumn{2}{|c|}{HoG (C3) : {\bf 1.4067}  }  \\ 
\bottomrule

\end{tabular} 
\end{sideways}
 \caption{Evaluation using HoG features on HumanEva-I. Positive {\em \% Gain} for each subject is shown in {\bf bold}, and in \textcolor{red}{red} otherwise. In the table, / shows that the values are not available (no training samples); Average gives the averaged {\em \% Gain} for the different motions of the same subject; C1 means image feature are computed only from the first camera; C1+C2+C3 means image features from three cameras are combined in a single descriptor. Columns TGP and HOTGP indicate the mean absolute error while the {\em \% Gain} column indicates the percentage reduction on error.}
\end{table}

\begin{table}[!htp]
\centering
\begin{sideways}
\setlength{\tabcolsep}{0.25em}
\renewcommand{\arraystretch}{1}
\begin{tabular}{| c | c |c | c | c |c |c |c |c |c |c | }
\toprule
\multirow{2}{*}{Features}  & \multirow{2}{*}{Motions} & \multicolumn{3}{c|}{Subject 1}  & \multicolumn{3}{c|}{Subject 2} & \multicolumn{3}{c|}{Subject 3} \\ 
\cline{3-11}  & & HSIC & HOHSIC & \% Gain & HSIC & HOHSIC & \% Gain & HSIC & HOHSIC & \% Gain \\  \midrule

\multicolumn{1}{|c|}{\multirow{6}{*}{\begin{tabular}[c]{@{}c@{}}HoG\\ (C1C2C3)\end{tabular}}}
& \multicolumn{1}{|c|}{Walking}
 & 159.6824 & 159.406 & 0.1730& 175.6263 & 175.6051 & 0.0120& 200.1092 & 200.1722 & \textcolor{red}{-0.0315} \\
& \multicolumn{1}{|c|}{Jog}
& 182.5472 & 182.6827 & \textcolor{red}{-0.0742} & 196.0942 & 196.2373 & \textcolor{red}{-0.0729} & 204.6334 & 204.7445 & \textcolor{red}{-0.0542} \\
& \multicolumn{1}{|c|}{Gestures}
& 121.9071 & 120.8305 & 0.8831 & 128.1372 & 127.128 & 0.78759 & 161.8518 & 160.9014 & 0.5872 \\
& \multicolumn{1}{|c|}{Box}
& 150.7509 & 150.3133 & 0.29026 & 190.5383 & 189.7172 & 0.4309 & 194.1369 & 193.4358 & 0.3611 \\
& \multicolumn{1}{|c|}{ThrowCatch}
 & 188.2296 & 188.4557 & \textcolor{red}{-0.1201} & 145.4743 & 144.7525 & 0.4961 & / &  / & / \\
& \multicolumn{1}{|c|}{Average}
& 160.6234	& 160.5714	& 0.0839 & 167.1740 & 167.0846 & 0.0808 & 190.1828 & 190.1161 & 0.0888 \\ \midrule
 
\multicolumn{1}{|c|}{\multirow{6}{*}{\begin{tabular}[c]{@{}c@{}}HoG\\ (C1)\end{tabular}}}
& \multicolumn{1}{|c|}{Walking}
 & 159.6824 & 159.4031 & 0.17489 & 175.6263 & 175.6204 & 0.0033 & 200.1092 & 200.1728 & \textcolor{red}{-0.0318} \\
& \multicolumn{1}{|c|}{Jog}
& 182.5472 & 182.7127 & \textcolor{red}{-0.0906} & 196.0942 & 196.2703 & \textcolor{red}{-0.0898} & 204.6334 & 204.7865 & \textcolor{red}{-0.0748} \\
& \multicolumn{1}{|c|}{Gestures}
 & 121.9071 & 120.829 & 0.88431& 128.1372 & 127.0738 & 0.82991 & 161.8518 & 160.9037 & 0.5857 \\
& \multicolumn{1}{|c|}{Box}
& 150.7509 & 150.3093 & 0.29293& 190.5383 & 189.7167 & 0.4312 & 194.1369 & 193.4387 & 0.3596 \\
& \multicolumn{1}{|c|}{ThrowCatch}
& 188.2296 & 188.3374 & \textcolor{red}{-0.0572} & 145.4743 & 144.6998 & 0.53246 & / & / & / \\
& \multicolumn{1}{|c|}{Average}
& 160.6234 & 160.5714	& 0.0839	& 167.1740 & 167.0846 & 0.0808 & 190.1828 & 190.1161 & 0.0888 \\ \midrule
 
\multicolumn{1}{|c|}{\multirow{6}{*}{\begin{tabular}[c]{@{}c@{}}HoG\\ (C2)\end{tabular}}}
& \multicolumn{1}{|c|}{Walking}
 & 159.6824 & 159.4017 & 0.17574 & 175.6263 & 175.6094 & 0.0096 & 200.1092 & 200.1662 & \textcolor{red}{-0.0285} \\
& \multicolumn{1}{|c|}{Jog}
 & 182.5472 & 182.6946 & \textcolor{red}{-0.0807} & 196.0942 & 196.2758 & \textcolor{red}{-0.0926} & 204.6334 & 204.8042 & \textcolor{red}{-0.0834} \\
& \multicolumn{1}{|c|}{Gestures}
& 121.9071 & 120.8318 & 0.88202  & 128.1372 & 127.0976 & 0.81134 & 161.8518 & 160.8771 & 0.6022 \\
& \multicolumn{1}{|c|}{Box}
& 150.7509 & 150.3304 & 0.27895  & 190.5383 & 189.7168 & 0.43115 & 194.1369 & 193.4401 & 0.3589 \\
& \multicolumn{1}{|c|}{ThrowCatch}
& 188.2296 & 188.3753 & \textcolor{red}{-0.0774} & 145.4743 & 144.6909 & 0.53856 & / &  / & /  \\ 
& \multicolumn{1}{|c|}{Average}
& 160.6234	& 160.3267	& 0.2357	& 167.1740 & 166.6781 & 0.3396 & 190.1828 & 189.8219 & 0.2123 \\ \midrule
 
\multicolumn{1}{|c|}{\multirow{6}{*}{\begin{tabular}[c]{@{}c@{}}HoG\\ (C3)\end{tabular}}}
& \multicolumn{1}{|c|}{Walking}
& 159.6824 & 159.4267 & 0.16009 & 175.6263 & 175.6623 & -0.0204 & 200.1092 & 200.2031  & \textcolor{red}{-0.0469} \\
& \multicolumn{1}{|c|}{Jog}
& 182.5472 & 182.744 & \textcolor{red}{-0.1078} & 196.0942 & 196.2974 & -0.10361 & 204.6334 & 204.8073 & \textcolor{red}{-0.0849} \\
& \multicolumn{1}{|c|}{Gestures}
& 121.9071 & 120.8289 & 0.88443 & 128.1372 & 127.0813 & 0.82406 & 161.8518 & 160.9025 & 0.5865 \\
& \multicolumn{1}{|c|}{Box}
& 150.7509 & 150.3118 & 0.29129 & 190.5383 & 189.716 & 0.43159 & 194.1369 & 193.4405 & 0.3587 \\
& \multicolumn{1}{|c|}{ThrowCatch}
& 188.2296 & 188.3351 & \textcolor{red}{-0.0560} & 145.4743 & 144.7051 & 0.52877 & / & / & / \\
& \multicolumn{1}{|c|}{Average}
& 160.6234  & 160.3293 & 0.2343 & 167.1741 & 166.6924  & 0.3321  & 190.1828  & 189.838  & 0.2033 \\ \midrule
 
\multicolumn{2}{|c|}{Total (\% Gain) }  & \multicolumn{3}{|c}{HoG (C1+C2+C3) :   {\bf 0.2566} } & \multicolumn{2}{|c|}{HoG (C1) :  {\bf   0.2589} }        & \multicolumn{2}{|c|}{HoG (C2) : {\bf  0.2589} }   & \multicolumn{2}{|c|}{HoG (C3) : {\bf 0.2625 }  }  \\ 
\bottomrule
\end{tabular} 
\end{sideways}
 \caption{Evaluation using HoG features on HumanEva-I. Positive {\em \% Gain} for each subject is shown in {\bf bold}, and in \textcolor{red}{red} otherwise. In the table, / shows that the values are not available (no training samples); Average gives the averaged {\em \% Gain} for the different motions of the same subject; C1 means image feature are computed only from the first camera; C1+C2+C3 means image features from three cameras are combined in a single descriptor. Columns HSIC and HOHSIC  (Gegen.) indicate the mean absolute error while the {\em \% Gain} column indicates the percentage reduction on error.}
\end{table}


\begin{table}[!htp]
\centering
\begin{sideways}
\setlength{\tabcolsep}{0.25em}
\renewcommand{\arraystretch}{1}
\begin{tabular}{| c | c |c | c | c |c |c |c |c |c |c | }
\toprule
\multirow{2}{*}{Features}  & \multirow{2}{*}{Motions} & \multicolumn{3}{c|}{Subject 1}  & \multicolumn{3}{c|}{Subject 2} & \multicolumn{3}{c|}{Subject 3} \\ 
\cline{3-11}  & & HSIC & HOHSIC & \% Gain & HSIC & HOHSIC & \% Gain & HSIC & HOHSIC & \% Gain \\  \midrule

\multicolumn{1}{|c|}{\multirow{6}{*}{\begin{tabular}[c]{@{}c@{}}HoG\\ (C1C2C3)\end{tabular}}}
& \multicolumn{1}{|c|}{Walking}
& 159.6824 & 159.6333 &  0.03069& 175.6263 & 175.6285 &  -0.0012& 200.1092 & 200.1266 &  \textcolor{red}{-0.0087} \\
& \multicolumn{1}{|c|}{Jog}
& 182.5472 & 182.5801 &  \textcolor{red}{-0.0180}& 196.0942 & 196.1282 &  \textcolor{red}{-0.0174}& 204.6334 & 204.6587 &  \textcolor{red}{-0.0123} \\ 
& \multicolumn{1}{|c|}{Gestures}
& 121.9071 & 121.7042 &  0.1664& 128.1372 & 127.9456 &  0.14957& 161.8518 & 161.6735 &  0.11017 \\
& \multicolumn{1}{|c|}{Box}
& 150.7509 &  150.6684 &  0.0547& 190.5383 & 190.3837 &  0.0811 & 194.1369 & 194.0057 &  0.0676 \\
& \multicolumn{1}{|c|}{ThrowCatch}
& 188.2296 & 188.2713 &  \textcolor{red}{-0.0221}& 145.4743 & 145.3374 &  0.0941&  / &  / &  /  \\
& \multicolumn{1}{|c|}{Average}
& 160.6234	& 160.5714	& 0.0839 & 167.1740 & 167.0846 & 0.0808 & 190.1828 & 190.1161 & 0.0888 \\ \midrule
 
\multicolumn{1}{|c|}{\multirow{6}{*}{\begin{tabular}[c]{@{}c@{}}HoG\\ (C1)\end{tabular}}}
& \multicolumn{1}{|c|}{Walking}
& 159.6824 & 159.6326 &  0.0311& 175.6263 & 175.6299&  \textcolor{red}{-0.0020}& 200.1092 & 200.1262,&  \textcolor{red}{-0.0085} \\
& \multicolumn{1}{|c|}{Jog}
& 121.9071 & 121.704 &  0.1665& 128.1372 & 127.9364 &  0.15667& 161.8518 & 161.6735 &  0.11017 \\
& \multicolumn{1}{|c|}{Gestures}
& 182.5472 & 182.5845 &  \textcolor{red}{-0.0204}& 196.0942 & 196.1347 &  \textcolor{red}{-0.0206}& 204.6334 & 204.6659 &  \textcolor{red}{-0.0158} \\
& \multicolumn{1}{|c|}{Box}
& 150.7509 & 150.6677 &  0.0551& 190.5383 & 190.3838 &  0.08113& 194.1369 & 194.0059 &  0.0674  \\
& \multicolumn{1}{|c|}{ThrowCatch}
& 188.2296 & 188.2513 &  \textcolor{red}{-0.0115}& 145.4743 & 145.3287 &  0.1001&  / &  / &  /  \\
& \multicolumn{1}{|c|}{Average}
& 160.6234 & 160.5680	& 0.0843 & 167.1740 & 167.0827 & 0.1126 & 190.1828 & 194.0059 & 0.0888 \\ \midrule
 
\multicolumn{1}{|c|}{\multirow{6}{*}{\begin{tabular}[c]{@{}c@{}}HoG\\ (C2)\end{tabular}}}
& \multicolumn{1}{|c|}{Walking}
&   159.6824 &  159.6331 &  0.0308&   175.6263 &  175.6278 &  \textcolor{red}{-0.0009}&   200.1092 &  200.1257 &  \textcolor{red}{-0.0082} \\
& \multicolumn{1}{|c|}{Jog}
&   121.9071 &  121.7045 &  0.1661&   128.1372 &  127.9402 &  0.15374&   161.8518 &  161.6687 &  0.11317 \\
& \multicolumn{1}{|c|}{Gestures}
&   182.5472 &  182.581 &  \textcolor{red}{-0.0184}&   196.0942 &  196.1343 &  \textcolor{red}{-0.0204}&   204.6334 &  204.6689 &  \textcolor{red}{-0.0173} \\
& \multicolumn{1}{|c|}{Box}
&   150.7509 &  150.6713 &  0.0527&   190.5383 &  190.3839 &  0.0810&   194.1369 &  194.0059 &  0.0674 \\
& \multicolumn{1}{|c|}{ThrowCatch}
&   188.2296 &  188.2578 &  \textcolor{red}{-0.0149}&   145.4743 &  145.3272 &  0.1011&  / &  / &  / \\
& \multicolumn{1}{|c|}{Average}
 & 160.6234	 & 160.5695	 & 0.0832 & 167.1740  & 167.0826  & 0.1120  & 190.1828  & 190.1173  & 0.0903 \\ \midrule
 
\multicolumn{1}{|c|}{\multirow{6}{*}{\begin{tabular}[c]{@{}c@{}}HoG\\ (C3)\end{tabular}}}
& \multicolumn{1}{|c|}{Walking}
&   159.6824 &  159.6374 &  0.0281&   175.6263 &  175.6382 & \textcolor{red}{-0.0067}&   200.1092 &  200.1319 &  \textcolor{red}{-0.0113} \\
& \multicolumn{1}{|c|}{Jog}
&   121.9071 &  121.704 &  0.1666&   128.1372 &  127.9402 &  0.15374&   161.8518 &  161.6735 &  0.11017 \\
& \multicolumn{1}{|c|}{Gestures}
&   182.5472 &  182.5894 &  \textcolor{red}{-0.0231}&   196.0942 &  196.1389 &  \textcolor{red}{-0.0227}&   204.6334 &  204.6695 &  \textcolor{red}{-0.0176} \\
& \multicolumn{1}{|c|}{Box}
&   150.7509 &  150.6681 &  0.0548&   190.5383 &  190.3837 &  0.08115&   194.1369 &  194.0059 &  0.0674 \\
& \multicolumn{1}{|c|}{ThrowCatch}
&   188.2296 &  188.2506 &  \textcolor{red}{-0.0111}&   145.4743 &  145.3293 &  0.0997&  / &  / &  /  \\
& \multicolumn{1}{|c|}{Average}
& 160.6234	& 160.5699	& 0.0430	& 167.1740 & 167.0860 & 0.0610 & 190.1828 & 190.1202 & 0.0372 \\ \midrule
 
\multicolumn{2}{|c|}{Total (\% Gain) }  & \multicolumn{3}{|c}{HoG (C1+C2+C3) :   {\bf 0.0471} } & \multicolumn{2}{|c|}{HoG (C1) :  {\bf   0.0846} }        & \multicolumn{2}{|c|}{HoG (C2) : {\bf 0.0952 } }   & \multicolumn{2}{|c|}{HoG (C3) : {\bf  0.0952}  }  \\ 
\bottomrule
\end{tabular} 
\end{sideways}
 \caption{Evaluation using HoG features on HumanEva-I. Positive {\em \% Gain} for each subject is shown in {\bf bold}, and in \textcolor{red}{red} otherwise. In the table, / shows that the values are not available (no training samples); Average gives the averaged {\em \% Gain} for the different motions of the same subject; C1 means image feature are computed only from the first camera; C1+C2+C3 means image features from three cameras are combined in a single descriptor. Columns HSIC and HOHSIC indicate the mean absolute error while the {\em \% Gain} column indicates the percentage reduction on error.}
\end{table}

\begin{table}[!htp]
\centering
\begin{sideways}
\setlength{\tabcolsep}{0.25em}
\renewcommand{\arraystretch}{1}
\begin{tabular}{| c | c |c | c | c |c |c |c |c |c |c | }
\toprule
\multirow{2}{*}{Features}  & \multirow{2}{*}{Motions} & \multicolumn{3}{c|}{Subject 1}  & \multicolumn{3}{c|}{Subject 2} & \multicolumn{3}{c|}{Subject 3} \\ 
\cline{3-11}  & & TGP & HOTGP & \% Gain & TGP & HOTGP & \% Gain & TGP & HOTGP & \% Gain \\  \midrule

\multicolumn{1}{|c|}{\multirow{6}{*}{\begin{tabular}[c]{@{}c@{}}HoG\\ (C1C2C3)\end{tabular}}}
& \multicolumn{1}{|c|}{Walking}
& 37.4366 & 9.5482 &  74.4951 & 21.7661 & 6.0762 &  72.084 & 44.4345 & 9.7506 &  78.0562 \\
& \multicolumn{1}{|c|}{Jog}
& 8.6656 & 2.3434 &  72.957  & 60.314 & 11.9035 &  80.2642 & 48.5744 & 9.0458 &  81.3775 \\
& \multicolumn{1}{|c|}{Gestures}
& 49.8854 & 10.5647 &  78.822 & 52.4706 & 10.9267 &  79.1756 & 35.704 & 9.1127 &  74.4771 \\
& \multicolumn{1}{|c|}{Box}
& 38.3083 & 9.5265 &  75.1321 & 52.3877 & 11.4781 &  78.0901 & 48.8305 & 11.0568 &  77.3567 \\ 
& \multicolumn{1}{|c|}{ThrowCatch}
& 63.9985 & 11.78 &  81.5933 & 59.1893 & 11.6716 &  80.2808 & / &  / &  / \\
& \multicolumn{1}{|c|}{Average}
& 39.6588 & 8.7525 &76.5999	 & 49.2255 & 10.4112 & 77.9789 & 44.3858 & 9.7414 & 77.8168 \\ \midrule
 
\multicolumn{1}{|c|}{\multirow{6}{*}{\begin{tabular}[c]{@{}c@{}}HoG\\ (C1)\end{tabular}}}
& \multicolumn{1}{|c|}{Walking}
& 36.9087 & 9.7608 &  73.5541 & 21.6479 & 9.931 &  54.125  & 44.895 & 10.94 &  75.6321 \\
& \multicolumn{1}{|c|}{Jog}
& 9.4207 & 2.5032 &  73.4284 & 60.7753 & 11.8101 &  80.5676  & 51.9379 & 32.982 &  36.4973 \\
& \multicolumn{1}{|c|}{Gestures}
& 49.1918 & 10.6199 &  78.4112  & 52.5459 & 10.9897 &  79.0856  & 35.9421 & 9.2451 &  74.2779 \\
& \multicolumn{1}{|c|}{Box}
& 38.042 & 9.5071 &  75.0089  & 52.8592 & 11.6581 &  77.945  & 48.8239 & 13.3002 &  72.7589 \\
& \multicolumn{1}{|c|}{ThrowCatch}
& 63.0999 & 11.7392 &  81.3959  & 58.7804 & 12.8523 &  78.1351  & / & / &  / \\
& \multicolumn{1}{|c|}{Average}
& 39.3326 & 8.82604	& 76.3597	& 49.3217 & 11.4482 & 73.9716 & 45.3997 & 16.6168 & 64.7915 \\\midrule
 
\multicolumn{1}{|c|}{\multirow{6}{*}{\begin{tabular}[c]{@{}c@{}}HoG\\ (C2)\end{tabular}}}
& \multicolumn{1}{|c|}{Walking}
& 37.8241 & 0.0109 &  99.9711 & 20.1582 & 0.0097 &  99.9518  & 42.5441 & 0.0115 &  99.9729 \\
& \multicolumn{1}{|c|}{Jog}
& 8.8519 & 0.0068&  99.9231  & 59.6685 & 0.0108 &  99.9817  & 51.9699 & 0.0114 &  99.978 \\
& \multicolumn{1}{|c|}{Gestures}
& 48.517 & 0.0118 &  99.9756  & 53.1385 & 0.0100 &  99.9811  & 35.9633 & 0.0112 &  99.9687 \\
& \multicolumn{1}{|c|}{Box}
& 36.7858 & 0.0098 &  99.9733  & 59.3451 & 0.0110 &  99.9813  & 49.0821 & 0.0108 &  99.9779 \\
& \multicolumn{1}{|c|}{ThrowCatch}
& 64.2616 & 0.0141 &  99.9779  & 52.2494 & 0.0097 &  99.9813  & / &  / &  / \\
& \multicolumn{1}{|c|}{Average}
& 39.2480	& 0.0107	& 99.9642	& 48.9119 & 0.0103 & 99.9754 & 44.8898 & 0.0112 & 99.9743 \\ \midrule
 
\multicolumn{1}{|c|}{\multirow{6}{*}{\begin{tabular}[c]{@{}c@{}}HoG\\ (C3)\end{tabular}}}
& \multicolumn{1}{|c|}{Walking}
&  21.4906 &  0.0108 &  99.9497&  12.4445 &  0.0095 &  99.9236 &  25.0639 &  0.0118 &  99.9528 \\
& \multicolumn{1}{|c|}{Jog}
&  4.3108 &  0.0061 &  99.8604&  34.9702 &  0.0103 &  99.9703 &  29.1524 &  0.0115 &  99.9604 \\
& \multicolumn{1}{|c|}{Gestures}
&  28.7849 &  0.0119 &  99.9587&  29.9167 &  0.0097 &  99.9674 &  19.6575 &  0.0112 &  99.9429 \\
& \multicolumn{1}{|c|}{Box}
&  22.2626 &  0.0094 &  99.9578&  30.5953 &  0.0101 &  99.967 &  28.0667 &  0.0099 &  99.9645 \\
& \multicolumn{1}{|c|}{ThrowCatch}
&  35.3846 &  0.0142 &  99.9599&  33.3681 &  0.0103 &  99.969 & / & / & / \\ 
& \multicolumn{1}{|c|}{Average}
& 22.4467	& 0.0104	& 99.9373	& 28.2589	&0.0100	&99.9594	& 25.4851 &0.0111	& 99.9551 \\ \midrule
 
\multicolumn{2}{|c|}{Total (\% Gain) }  & \multicolumn{3}{|c}{HoG (C1+C2+C3) :   {\bf 99.9506} } & \multicolumn{2}{|c|}{HoG (C1) :  {\bf 77.4652} }  & \multicolumn{2}{|c|}{HoG (C2) : {\bf 71.7076} }   & \multicolumn{2}{|c|}{HoG (C3):{\bf  99.9713} }  \\ 
\bottomrule

\end{tabular} 
\end{sideways}
 \caption{Evaluation using HoG features on HumanEva-I. Positive {\em \% Gain} for each subject is shown in {\bf bold}, and in \textcolor{red}{red} otherwise. In the table, / shows that the values are not available (no training samples); Average gives the averaged {\em \% Gain} for the different motions of the same subject; C1 means image feature are computed only from the first camera; C1+C2+C3 means image features from three cameras are combined in a single descriptor. Columns TGP and HOTGP  {\bf (Gegen) (1,5)} indicate the mean absolute error while the {\em \% Gain} column indicates the percentage reduction on error.}
\end{table}

\end{document}